  \providecommand\BibTeX{{%
    \normalfont B\kern-0.5em{\scshape i\kern-0.25em b}\kern-0.8em\TeX}}}
\newtheorem{problem definition}{Problem Definition}
\newtheorem{theorem}{Theorem}
\begin{document}

\title{TransGNN: Harnessing the Collaborative Power of Transformers and Graph Neural Networks for Recommender Systems}

\author{Peiyan Zhang}
\authornote{Both authors contributed equally to this research.}
\affiliation{\institution{Hong Kong University of \\ Science and Technology}\country{Hong Kong}}
\email{pzhangao@cse.ust.hk}

\author{Yuchen Yan}
\authornotemark[1]
\affiliation{\institution{School of Intelligence Science and Technology, Peking University}\city{Beijing}\country{China}}
\email{2001213110@stu.pku.edu.cn}

\author{Xi Zhang}
\affiliation{\institution{Interdisciplinary Institute for Medical Engineering, Fuzhou University}\country{China}}
\email{zxwinner@gmail.com}

\author{Chaozhuo Li}
\authornote{Chaozhuo Li is the corresponding author}
\affiliation{\institution{Microsoft Research Asia}\city{Beijing}\country{China}}
\email{lichaozhuo1991@gmail.com}

\author{Senzhang Wang}
\affiliation{\institution{Central South University}\country{China}}
\email{szwang@csu.edu.cn}

\author{Feiran Huang}
\affiliation{\institution{Jinan University}\country{China}}
\email{huangfr@jnu.edu.cn}


\author{Sunghun Kim}
\affiliation{\institution{Hong Kong University of \\ Science and Technology}\country{Hong Kong}}
\email{hunkim@cse.ust.hk}

\renewcommand{\shortauthors}{Peiyan Zhang et al.}
\begin{abstract}
  Graph Neural Networks (GNNs) have emerged as promising solutions for collaborative filtering (CF) through the modeling of user-item interaction graphs.
  The nucleus of existing GNN-based recommender systems involves  recursive message passing along user-item interaction edges to refine encoded embeddings. 
  Despite their demonstrated effectiveness, current GNN-based methods encounter challenges of limited receptive fields and the presence of noisy "interest-irrelevant" connections.  
  In contrast, Transformer-based methods excel in aggregating information adaptively and globally. 
  Nevertheless, their application to large-scale interaction graphs is hindered by inherent complexities and challenges in capturing intricate, entangled structural information.  
  In this paper, we propose TransGNN, a novel model that integrates Transformer and GNN layers in an alternating fashion to mutually enhance their capabilities. 
  Specifically, TransGNN leverages Transformer layers to broaden the receptive field and disentangle information aggregation from edges, which aggregates information from more relevant nodes, thereby enhancing the message passing of GNNs. 
  Additionally, to capture graph structure information effectively, positional encoding is meticulously designed and integrated into  GNN layers to encode such structural knowledge into node attributes, thus enhancing the Transformer's performance on graphs.
  Efficiency considerations are also alleviated by proposing the sampling of the most relevant nodes for the Transformer, along with two efficient sample update strategies to reduce complexity. Furthermore, theoretical analysis demonstrates that TransGNN offers increased expressiveness compared to GNNs, with only a marginal increase in linear complexity. Extensive experiments on five public datasets validate the effectiveness and efficiency of TransGNN. Our code is available at https://github.com/Peiyance/TransGNN-torch.
\end{abstract}



\keywords{Graph Neural Networks, Transformers, Recommender Systems}

\maketitle

\section{Introduction}
\par Recommender systems play vital roles in various online platforms, due to their success in addressing information overload challenges by recommending useful content to users~\citep{zhang2023efficiently,guo2022evolutionary, zhou2023exploring,liu2023chatgpt}. To accurately infer the user preference, encoding user and item informative representations is the core part of effective collaborative filtering (CF) paradigms based on the observed user-item interactions~\citep{he2017neural,rendle2020neural}. Recent years have witnessed a proliferation of development of graph neural networks (GNNs) for modeling graph-structural data~\citep{li2019adversarial,yang2021graphformers,li2017ppne,yang2022semantic,zhao2022learning}. One promising direction is to perform the information propagation along the user-item interactions to refine user embeddings based on the recursive aggregation schema~\citep{ying2018graph,wang2019neural,he2020lightgcn}. 


\par Notwithstanding the effectiveness of the existing graph-based CF models, several fundamental challenges remain inadequately resolved. 
\textit{First}, the message passing mechanism relies on edges to fuse the graph structures and node attributes, leading to strong bias and potential noise~\citep{BalcilarHGVAH21}. For example, recent studies on eye tracking demonstrate that users are less likely to browse items that are ranked lower in the recommended lists, while they tend to interact with the first few items at the top of lists, regardless of the items’ actual relevance~\citep{joachims2007evaluating,joachims2017accurately}. 
Hence, the topological connections within interaction graphs are impeded by the aforementioned positional bias, resulting in less convincing message passings~\citep{chen2023bias}. 
Besides,  users may interact with products they are not interested in due to the over-recommendation of popular items \citep{zhang2021causal}, leading to the formation of "interest-irrelevant connections" in the user-item interaction graph.  
As such, the graph generated from user feedback towards the recommended lists may fail to reflect user preference faithfully~\citep{collins2018study}. 
Worse still, the propagation of embeddings along edges can exacerbate noise effects, potentially distorting the encoding of underlying user interests in GNN-based models.

\par \textit{Second}, the receptive field of GNNs is also constrained by the challenge of over-smoothing \citep{deeper_insight}.
It has been proven that as the GNNs architecture goes deeper and reaches a certain extent, the model will not respond to the training data, and the node representations obtained by such deep models tend to be over-smoothed and also become indistinguishable~\citep{AnoteOnOversmooth, GPRGNN, Li0TG19, Uri2Y21}. 
Consequently, the optimal number of layers for GNN models is typically limited to no more than 3 \citep{ying2018graph, wang2019neural, he2020lightgcn}, where the models can only capture up to 3-hop relations. 
However, in real world applications, item sequences often exceed a length of 3, suggesting the presence of important sequential patterns that extend beyond this limitation. Due to the inherent constraint of the network structure, GNN-based models struggle to capture such longer-term sequential information. 

\par Fortunately, the Transformer architecture~\citep{attention} appears to provide an avenue for addressing these inherent limitations. Owing to the self-attention mechanism, every items can aggregate information from all the items in the user-item interaction sequence. Consequently, Transformer can capture the long-term dependency within the sequence data, and has displaced the convolutional and recurrent neural networks to become the new de-facto standard among many recommendation tasks~\citep{jiang2022adamct,fan2021continuous}. 
Nevertheless, while Transformers exhibit the capability to globally and adaptively aggregate information, their ability to effectively utilize graph structure information is constrained. This limitation  stems from the fact that the aggregation process in Transformers does not rely on edges, resulting in an underestimation of crucial historical interactions~\citep{min2022transformer}.

\par In this paper, we inquire whether the integration of Transformers and GNNs can leverage their respective strengths to mutually enhance performance. By leveraging Transformers, the receptive field of GNNs can be expanded to encompass more relevant nodes, even those located distantly from central nodes. Conversely, GNNs can assist Transformers in capturing intricate graph topology information and efficiently aggregating relevant nodes from neighboring regions. 
Nevertheless, the integration of GNNs and Transformers for modeling graph-structured CF data poses significant challenges, primarily encompassing the following three core aspects. 
(1) \textit{How to sample the most relevant nodes in the attention sampling module?} As the user-item interaction graph may contain ``interest-irrelevant'' connections, directly aggregating information from all interaction edges will impair the accurate user representation. Meanwhile, considering the most relevant nodes not only reduces computational complexity but also filters out irrelevant information from noisy nodes.
(2) \textit{How can Transformers and GNNs be effectively coupled in a collaborative framework?} 
Given the individual merits inherent to both Transformers and GNNs, it posits a logical progression to envisage a collaborative framework where these two modules engage in a mutual reinforcement within user modeling. (3) \textit{How to update the attention samples efficiently to avoid exhausting complexity?} 
The computation of self-attention weights across the entire graph dataset for each central node entails a time and space complexity of $O(N^2)$, posing challenges such as the out-of-memory problem with increasingly large graphs. Hence, there exists an imperative to devise efficient strategies for updating attention samples.

\par To tackle aforementioned challenges, we introduce a novel framework named \textbf{TransGNN}, which amalgamates the prowess of both GNNs and Transformers. 
To mitigate complexity and alleviate the influence of irrelevant nodes, we 
 first propose sampling attention nodes for each central node based on semantic and structural information.
After that,  we introduce three types of positional encoding: (i) shortest-path-based positional encoding, (ii) degree-based positional encoding, and (iii) PageRank-based positional encoding. 
Such positional encoding embed various granularity of  structural topology information into node embeddings, facilitating the extraction of simplified graph structure information for Transformers. 
Then, we devise the TransGNN module where Transformers and GNNs alternate to mutually enhance their performance. 
Within the GNN layer, Transformers aggregate attention sample information with low complexity to expand GNNs' receptive fields focusing on the most relevant nodes.
Conversely, within Transformers, GNNs' message-passing mechanism aids in fusing representations and graph structure to capture rich topology information. 
Efficient retrieval of more relevant node information from neighborhoods is also facilitated by the message-passing process.  
Furthermore, we propose two efficient methods for updating attention samples, which can easily be generalized to large-scale graphs. 
Finally, a theoretical analysis of the expressive ability and complexity of TransGNN is presented, demonstrating its enhanced expressiveness compared to GNNs with only marginal additional linear complexity. 
TransGNN is extensively evaluated over five public benchmarks, and  substantial improvements demonstrate its superiority. 

Our contributions can be summarized as follows:
\begin{itemize}[leftmargin=*,noitemsep,topsep=0pt]
    \item We introduce a novel model, TransGNN, wherein Transformers and GNNs synergistically collaborate. Transformers broaden the receptive field of GNNs, while GNNs capture essential structural information to enhance the Transformer's performance.
    \item To mitigate the challenge of complexity, we introduce a sampling strategy along with two efficient methods for updating relevant samples efficiently.  
    \item We perform a theoretical analysis on the expressive capacity and computational complexity of TransGNN, revealing that TransGNN exhibits greater potency compared to GNNs with small additional computational overhead. 
    \item We conduct comprehensive experiments on five public datasets from different domains, where TransGNN outperforms competitive baseline models significantly and consistently. In-depth analysis are provided towards the rationality of TransGNN from both technical and empirical perspectives. 
\end{itemize}

\section{Related Work}
\textbf{Recap Graph Collaborative Filtering Paradigm.} Graph-based collaborative filtering paradigm introduce graph structures to represent the interactions between users and items. Given \textit{I} users and \textit{J} items with the user set $\mathcal{U}=\{u_{1},...,u_{\textit{I}}\}$ and item set $\mathcal{V}=\{v_{1},...,v_{\textit{J}}\}$, edges in the user-item interaction graph $\mathcal{G}$ are constructed if user $u_{i}$ has interacted with item $v_{j}$. Through the incorporation of the user-item interaction graphs, graph-based CF methods manage to capture multi-order connectivity information, leading to more accurate recommendation outcomes.


\noindent \textbf{Recommendation with Graph Neural Networks.} Recent works have embarked on formulating diverse  graph neural architectures to model the intricate user-item interaction landscapes via embedding propagation. Using the message passing schema, both users and items are transformed into embeddings that retain the information from multi-hop connections. 
Notably, PinSage~\cite{ying2018graph} and NGCF~\cite{wang2019neural} anchor their foundations on the graph convolutional framework within the realm of the spectral domain. Subsequently, LightGCN~\cite{he2020lightgcn} advocates for a more streamlined approach by sidelining intricate non-linear transformations and endorsing the sum-based pooling applied to neighboring representations.
Although GNNs have achieved state-of-the-art performance in CF, the limited receptive field compromises their power. Shallow GNNs can only aggregate nearby information, which shows strong structure bias and noise, while deep GNNs suffer from the over-smoothing problem and aggregate much irrelevant information~\cite{defsubspace}.

\noindent \textbf{Recommendation with Transformers.} Recently, attentive modules are extensively studied in the recommendation venue, resulting in extraordinary performances~\cite{kang2019recommender,jiang2022adamct,zhou2023attention}. Self-attention models, in particular, have garnered substantial attention due to their capacity for point-to-point feature interactions within item sequences. This mechanism effectively addresses the challenge of global dependencies and enables the incorporation of longer sequences enriched with a wealth of information~\cite{kang2018self,luo2021stan,jin2022code}. Many existing works exert their efforts to generalize the Transformer architecture  to graph data. However the main problems they encounter are: (1) The design of node-wise positional encoding. (2) The computational expensive calculation of pairwise attention on large graphs. For positional encoding, Laplacian Encoding~\cite{dwivedi2020generalization}, and Random Walk have been studied both theoretically and empirically. With respect to the scalability problem, some work try to restrict the receptive filed from global to local, for example, ADSF~\cite{Zhang2020Adaptive} introduces random walks to generate high order local receptive filed, and the GAT~\cite{gat} is the extreme scenario where each node only sees its one-hop neighbor. 

\begin{figure*}[h]
\centering
\includegraphics[width=0.99\textwidth]{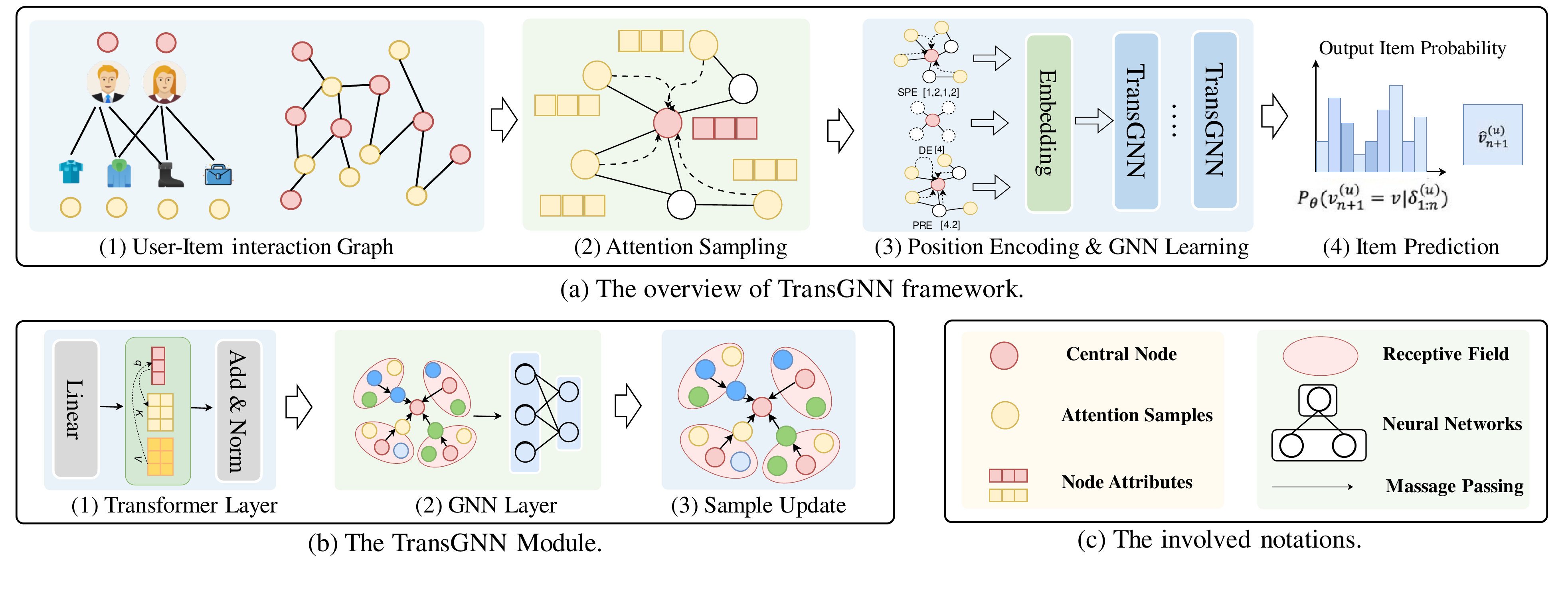}
\caption{The framework of TransGNN. We first sample relevant nodes for the central nodes, then we calculate positional encoding to enhance the raw attributes by combining the structure information. In the TransGNN module, the Transformer layer and GNN layer improve each other, followed by the samples update sub-module.}
\label{figure:framework}
\end{figure*}
\section{Methodology}
\par This section begins with an exposition of the TransGNN framework, followed by a detailed elucidation of each constituent component. 
\subsection{Model Framework}
\par The framework of TransGNN is shown in \autoref{figure:framework}, which consists of three important components: (1) attention sampling module, (2) positional encoding module, (3) TransGNN module. 
We first sample the most relevant nodes for each central node by considering the semantic similarity and graph structure information in the attention sampling module. Then in the positional encoding module, we calculate the positional encoding to help the Transformer capture the graph topology information.  After these two modules, we use the TransGNN module, which contains three sub-modules in order: (i) Transformer layer, (ii) GNN layer, (iii) samples update sub-module. Among them, the Transformer layer is used to expand the receptive field of the GNN layer and aggregate the attention samples information efficiently, while the GNN layer helps the Transformer layer perceive the graph structure information and obtain more relevant information of neighbor nodes. The samples update sub-module is integrated to update the attention samples efficiently.

\subsection{Attention Sampling Module}
\par Calculating attention across the entire user-item interaction graph presents two notable challenges: 
(i) The computational complexity of attention calculation scales quadratically ($O(N^2)$), which becomes impractical for large-scale recommender systems. 
(ii) Under the global attention setting, irrelevant user-item interactions are also incorporated, leading to suboptimal performance.

\par In the context of recommender systems, we posit that it is unnecessary to compute attention across the entire graph for each node. Instead, prioritizing the most relevant nodes is sufficient, thereby reducing computational complexity and eliminating noisy node information. Consequently, we advocate for sampling the most pertinent nodes for a given user or item node within the attention sampling module. To facilitate this, we commence by computing the semantic similarity matrix: 
\begin{equation}
    \mathbf S = \mathbf X \mathbf X^{\top},
    \label{equ:first_similarity}
\end{equation}
where $\mathbf X \in \mathbb R^{N\times d}$ consists of nodes' attributes. However, through $\mathbf S$ we can only get the raw semantic similarity, overlooking the structural intricacies of user preferences. Recognizing that a user's preference for one item might influence their affinity for another (due to shared attributes or latent factors), we refine our similarity measure by considering the neighbor nodes' preference before sampling.
We use the following equation to update the similarity matrix to  incorporate the preferences of neighboring nodes: 
\begin{equation}
    \mathbf S = \mathbf S + \alpha \hat{\mathbf A} \mathbf S,
    \label{equ:second_similarity}
\end{equation}
where $\alpha$ is the balance factor, and in this paper, we set $\alpha$ as 0.5. $\hat{\mathbf A}=\mathbf A+\mathbf I$ where $\mathbf A \in \mathbb R^{N\times N}$ is the adjacent matrix and $\mathbf I$ is the identity matrix. Based on the new similarity matrix $\mathbf S \in \mathbb R^{N\times N}$, for every node $v_i \in \mathcal V$ in the input graph, we sample the most relevant nodes as its attention samples as follows:

\textbf{Attention samples}: Given an input graph $\mathcal G$ and its similarity matrix $\mathbf S$, for node $v_i$ in the graph, we define its attention samples as set $\text{Smp}(v_i)=\left\{v_j| v_j \in V \, \text{and} \, S(i,j) \in \text{top-k}(S(i,:))\right\}$ where $S(i,:)$ denotes the $i$-th row of $\mathbf S$ and the $k$ works as a hyper-parameter which decides how many nodes should be attended attention. 


\subsection{Positional Encoding Module}
\par User-item interactions in recommender systems embody intricate structural information, critical for deriving personalized recommendations. Unlike the grid-like data where the sequential patterns can be easily captured by Transformer, interaction graphs present a more challenging topology to navigate. To enrich Transformers with this topological knowledge, we introduce three distinct positional encodings tailored for recommendation scenarios: (i) Shortest path hop based positional encoding. (ii) Degree-based positional encoding. (iii) PageRank based positional encoding. 
The first two encoding signify the proximity between users and items, emphasizing the diversity and frequency of user interactions or the popularity of items. Meanwhile, the last encoding indicates the significance determined by the graph topology.

\subsubsection{Shortest Path Hop based Positional Encoding}
\par User-item proximity in interaction graphs can hint at user preferences. For every user, the distance to various items (or vice versa) can have distinct implications. We encapsulate this by leveraging shortest path hops.
Specifically, we denote the shortest path hop matrix as $\mathbf P$ and for each node $v_i \in \mathcal V$ and its attention sample node $v_j \in \text{Smp}(v_i)$ the shortest path hop is $P(i, j)$, we calculate the shortest path hop based positional encoding(SPE) for every attention sample node $v_j$ as:
\begin{equation}
    \text{SPE}(v_i, v_j) = \text{MLP}(P(i, j)),
\end{equation}
where $\text{MLP}(\cdot)$ is implemented as a two-layer neural network.

\subsubsection{Degree based Positional Encoding}
\par The interaction frequency of a user, or the popularity of an item, plays a pivotal role in recommendations. An item's popularity or a user's diverse taste can be harnessed using their node degree in the graph.
Therefore, we propose to use the degree to calculate the positional encoding. 
Formally, for any node $v_i$ whose degree is $\text{deg}_i$, we calculate the degree based positional encoding(DE) as:
\begin{equation}
    \text{DE}(v_i) = \text{MLP}(\text{deg}_i). 
\end{equation} 

\subsubsection{Page Rank based Positional Encoding}
\par Certain users or items exert more influence due to their position in the interaction graph. PageRank offers a way to gauge this influence, facilitating better recommendations.
In order to obtain the influence of structure importance, we propose to calculate positional encoding based on the page rank value for every node. Formally, for node $v_i$ we denote its page rank value as $\text{Pr}(v_i)$, and we calculate the page rank based positional encoding(PRE) as:
\begin{equation}
    \text{PRE}(v_i) = \text{MLP}(\text{Pr}(v_i)). 
\end{equation}

\par By aggregating the above encodings with raw user/item node attributes, we enrich the Transformer's understanding of the recommendation landscape.
Specifically, for the central node $v_i$ and its attention samples $\text{Smp}(v_i)$, we aggregate the positional encoding in the following ways:
\begin{equation}
\small
\begin{aligned}
    & \mathbf h_i = \text{COMB}(\text{AGG}(\mathbf x_i, \text{SPE}(v_i, v_i), \text{DE}(v_i), \text{PRE}(v_i))) \\
    & \mathbf h_j = \text{COMB}(\text{AGG}(\mathbf x_j, \text{SPE}(v_i, v_j), \text{DE}(v_j), \text{PRE}(v_j))) \, v_j \in \text{Smp}(v_i)
\end{aligned}
\end{equation}
where $\mathbf x_i, \mathbf x_j$ are the raw attributes of $v_i, v_j$ respectively, $\text{AGG}(\cdot)$ is the aggregation function and $\text{COMB}(\cdot)$ is the combination function. In this paper, we use a two-layer MLP  as $\text{COMB}(\cdot)$ and vector concatenation as $\text{AGG}(\cdot)$.

\subsection{TransGNN Module}
\par Traditional Graph Neural Networks (GNNs) exhibit limitations in comprehending the expansive relationships between users and items, due to their narrow receptive fields and the over-smoothing issue in deeper networks. Crucially, relevant items for users might be distant in the interaction space. Although Transformers can perceive long-range interactions, they often miss out on the intricacies of structured data in recommendation scenarios, further challenged by computational complexities. The TransGNN module synergizes the strengths of GNNs and Transformers to alleviate these issues. This module consists of: (i) Transformer layer, (ii) GNN layer, and (iii) samples update sub-module.


\subsubsection{Transformer Layer}
\par To optimize user-item recommendation, the Transformer layer broadens GNN's horizon, focusing on potentially important yet distant items.
In order to lower the complexity and filter out the irrelevant information, we only consider the most relevant samples for each central node. In the following, we use central node $v_i$ and its attention samples $\text{Smp}(v_i)$ as an example to illustrate the Transformer layer, and for other nodes, this process is the same.
\par We denote the input of Transformer layer as $\mathbf H \in \mathbb R^{N\times d_\text{in}}$ and the representation of central node $v_i$ is $h_i$. We stack the representations of attention samples $\text{Smp}(v_i)$ as the matrix $\mathbf H^{\text{Smp}}_i \in \mathbb R^{k\times d_\text{in}}$. We use three matrices $\mathbf W_q, \mathbf W_k, \mathbf W_v \in \mathbb R^{d_\text{in}\times d_\text{out}}$ to project the corresponding representations to $\mathbf Q$, $\mathbf K$ and $\mathbf V$ 
respectively and we aggregate the information based on the attention distribution as:

\begin{equation}
\begin{aligned}
    \mathbf{h}_i & = \text{softmax}(\frac{\mathbf Q \mathbf K^{\top}}{\sqrt{d_\text{out}}})V   \\
\end{aligned}
\label{equ:attn_agg}
\end{equation}
where
\begin{equation}
	\begin{cases}
		\mathbf Q & = \mathbf h_i \mathbf{W}_Q,\\
		\mathbf K & = \mathbf H^\text{Smp}_i \mathbf{W}_k,\\
		\mathbf V & = \mathbf H^\text{Smp}_i \mathbf{W}_v \\
	\end{cases}
\end{equation}
in which $\mathbf Q$ is the representation of the query and $\mathbf K, \mathbf V$ are the representation of keys and values. This process can be expanded to multi-head attention as:
\begin{equation}
    \text{MultiHead}(\mathbf h_i)=\text{Concat}(\text{head}_1,...,\text{head}_m)\mathbf W_m,
\end{equation}
where $m$ is the head number, $\text{Concat}(\cdot)$ denotes the concatanate function and $\mathbf W_m$ is the projection matrix, each head is calculated as $\mathbf h_i$ in \autoref{equ:attn_agg}.

\subsubsection{GNN Layer}
\par Incorporating interactions and structural nuances, this layer aids the Transformer in harnessing the user-item interaction graphs more profoundly.
Given node $v_i$, the message passing process of the GNN layer can be described as:
\begin{equation}
\begin{aligned}
    & \mathbf h_M(v_i) = \text{Message}(\mathbf h_k,\forall v_k \in \mathcal N(v_i)) \\
    & \mathbf h_i = \text{Combine}(\mathbf h_i, \mathbf h_M(v_i)),
\end{aligned}
\end{equation}
where $\mathcal N(v_i)$ is the neighbor nodes set of $v_i$. $h_i, h_k$ are the representations of $v_i, v_k$ respectively. $\text{Message}(\cdot)$ and $\text{Combine}(\cdot)$ are the message passing function and aggregation function defined by GNN layer.

\subsubsection{Samples Update Sub-Module}
\par After the Transformer and GNN layers, the attention samples should be updated upon new representations.  However, directly calculating the similarity matrix incurs a computational complexity of $O(N^2)$. Here we introduce two efficient strategies for updating the attention samples. 

\paragraph{(i) Random Walk based Update}
\par Recognizing the tendency of users to exhibit consistent taste profiles, this approach delves into the local neighborhoods of each sampled item to uncover potentially relevant items. We employ a random walk strategy to explore the local neighborhood of every sampled node. Specifically, the transition probability of the random walk is determined based on the similarity, as follows: 
\begin{equation}
    p_{i\to j} = 
  \begin{cases}
    \frac{\mathbf h_i \mathbf h_j^{T}}{\sum_{l \in \mathcal N(v_i)} \mathbf h_i \mathbf h_l^{\top}}, &\text{if $v_j \in \mathcal N(v_i)$}\\
	0, &\text{if $v_j \not\in \mathcal N(v_i)$}
  \end{cases}
\end{equation}
The transfer probability can be calculated efficiently in the message passing process. Based on the transfer probability, we walk a node sequence with length $L$ for each attention sample, and then we choose the new attention samples among the explored nodes upon new representations.
\paragraph{(ii) Message Passing based Update}
\par The random walk-based update strategy has extra overhead. We propose another update strategy that utilizes the message passing of the GNN layer to update the samples without extra overhead. Specifically, we aggregate the attention samples from the neighbor nodes for each central node in the message passing process of the GNN layer. The intuition behind this is that the attention samples of the neighbors may also be the relevant attention samples of the central nodes. We denote the attention samples set of the neighbor nodes as the attention message, which is defined as follows:
\begin{equation}
    \text{Attn\_Msg}(v_i) = \bigcup \text{Smp}(v_j) \; \forall v_j \in \mathcal N(v_i),
\end{equation}
thus we choose the new attention samples among $\text{Attn\_Msg}(v_i)$ for node $v_i$ based on the new representations.

\subsection{Model Optimization}
For training our TransGNN model, we employ the pairwise rank loss to optimize the relative ranking of items~\citep{chen2009ranking}:
\begin{equation}
    \mathcal{L}=\Sigma_{\mathcal{u}\in \mathcal{U}}\Sigma_{t=1}^{n}\log(\sigma(P(i_{t+1})-P(i_{t+1}^{-}))),
\end{equation}
where we pair each ground-truth item $i_{t+1}$ with a negative item $i_{t+1}^{-}$ that is randomly sampled. $P(i_{t+1})$ and $P(i_{t+1}^{-})$ are prediction scores given by TransGNN, $\sigma(.)$ denotes the sigmoid function.

\subsection{Complexity Analysis}
\par Here the complexity of TransGNN is presented and discussed. The overhead of the attention sampling module and the positional encoding module can be attributed to the data pre-processing stage. The complexity of the attention sampling module is $O(N^2)$, and the most complex part of the positional encoding module is the calculation of the shortest hop matrix $\mathbf P$. Considering the graphs in real applications are sparse and have positive edge weights, Johnson algorithm~\cite{allaoui2009johnson} can be employed to facilitate the analysis. With the help of the heap optimization, the time and space complexity can be reduced to $O(N(N+E)\log E)$ where $N$ is the node number, and $E$ is the edge number. The extra overhead of the TransGNN module compared with GNNs mainly focus on the Transformer layer and attention samples update. The extra complexity caused by the Transformer layer is $O(Nk)$ where $k$ is the attention samples number, and the extra complexity of samples update is $O(Nkd_a)$ for message passing mechanism where $d_a$ is the average degree (the extra complexity will be $O(NkL)$ if we use random walk-based update). \textbf{Therefore, we show that TransGNN has at most $O(N(N+E)\log E)$ data pre-processing complexity and linear extra complexity compared with GNNs because $kd_a$ is a constant and $kd_a<<N$.}

\subsection{Theoretical Analysis}
\par Here we demonstrate the expression power of TransGNN via following two theorems with their proofs. 
\begin{theorem}
TransGNN has at least the expression ability of GNN, and any GNN can be expressed by TransGNN.
\end{theorem}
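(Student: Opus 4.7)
The plan is to prove the theorem by exhibiting an explicit reduction: given an arbitrary GNN $f_{\text{GNN}}$, we construct a choice of TransGNN's parameters and hyperparameters so that the resulting TransGNN network computes exactly $f_{\text{GNN}}$. This suffices for both clauses of the statement because once TransGNN can realize any GNN as a special case, its function class contains the GNN function class.

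The first step is to neutralize the positional encoding module. The three encodings $\text{SPE}, \text{DE}, \text{PRE}$ are produced by MLPs; I would pick all weight matrices and biases of these MLPs to be zero so that each encoding outputs the zero vector. Then the combination $\mathbf h_i = \text{COMB}(\text{AGG}(\mathbf x_i,\mathbf 0,\mathbf 0,\mathbf 0))$ reduces, for a suitably linear $\text{COMB}$ realized by the MLP (take the first block of the weight matrix to be identity and the remaining blocks zero), to $\mathbf h_i = \mathbf x_i$. Thus the input to the TransGNN module coincides with the raw node attributes fed to the reference GNN.

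The second step is to neutralize the Transformer layer. I would fix the attention-sampling hyperparameter to $k=1$ so that $\text{Smp}(v_i)=\{v_i\}$ for every node; this is always achievable since $S(i,i)$ is by construction among the largest entries of $S(i,:)$ after the self-loop in $\hat{\mathbf A}=\mathbf A+\mathbf I$. With a single key/value equal to $\mathbf h_i$, the softmax in \autoref{equ:attn_agg} collapses to $1$, giving $\mathbf h_i \mathbf W_v$; choosing $\mathbf W_v=\mathbf I$ (or more generally the product $\mathbf W_v \mathbf W_m$ equal to identity in the multi-head case) makes the Transformer layer an identity map. The third step is then trivial: within the TransGNN module the GNN sub-layer is already an arbitrary GNN layer, so I instantiate it as the layer of $f_{\text{GNN}}$. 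Finally, the samples update sub-module only affects subsequent layers through the attention samples, which were fixed as self-loops and will remain so under either update rule, so it cannot alter the output.

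I expect the main obstacle to be the $\text{COMB}$/$\text{AGG}$ step of the positional encoding: because $\text{COMB}$ is defined as a two-layer MLP with a non-linearity, making it compute exactly the identity on the raw attribute block requires that either the non-linearity be chosen to act as identity on the relevant range (e.g., ReLU on non-negative inputs), or that one appeals to universal approximation for an $\varepsilon$-close realization. I would resolve this by arguing that, since TransGNN's MLPs are generic learnable maps, the identity is within the parameter family up to the choice of activation; alternatively, a cleaner argument is to redefine ``expression ability'' in the approximation sense, in which case universal approximation of the two-layer MLP closes the gap. Everything else in the reduction is exact and essentially a bookkeeping exercise.
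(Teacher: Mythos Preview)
Your reduction is essentially the paper's: force the attention to a top-$1$ self-mask so the Transformer layer degenerates, and let the GNN sub-layer realize the target network. The paper's version is terser and differs only in bookkeeping: it does not touch the positional-encoding or samples-update modules at all, and instead of making the Transformer the exact identity it writes the top-$1$ output as $\mathbf H_{\text{out}}=\tfrac{1}{\sqrt{d_{\text{out}}}}\mathbf H(\mathbf W_v+\mathbf I)$ (implicitly assuming a residual connection) and then \emph{absorbs} this linear factor into the downstream GCN weight $\mathbf W$ by choosing $\mathbf W_v=(\sqrt{d_{\text{out}}}-1)\mathbf I$, so that $\sigma(\mathbf A\mathbf H_{\text{out}}\mathbf W)=\sigma(\mathbf A\mathbf H\mathbf W)$. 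Your treatment is more careful in accounting for all four components; the $\text{COMB}$-MLP obstacle you flag is genuine but is simply not addressed in the paper's proof.
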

\begin{proof}
If we add attention mask as top $1$ mask, the Eqn of Transformer layer will become:
\begin{equation}
    \mathbf H_\text{out} = \frac{1}{\sqrt{d_\text{out}}}\mathbf H(\mathbf W_v + \mathbf I)
\end{equation}
We use the GCN layer as an example and the message passing can be derived as:
\begin{equation}
\begin{aligned}
    \mathbf H & = \sigma(\mathbf A \mathbf H_\text{out}\mathbf W) \\
      & = \sigma(\mathbf A\frac{1}{\sqrt{d_\text{out}}}\mathbf H(\mathbf W_v + \mathbf I)\mathbf W),
\end{aligned}
\label{equ:message_passing}
\end{equation}
where $\sigma(\cdot)$ is the activation function and if we set $\mathbf W_v$ as diagonal matrix with the diagonal value as $\sqrt{d_\text{out}}-1$ and the \autoref{equ:message_passing} will become:
\begin{equation}
\begin{aligned}
    \mathbf H = \sigma(\mathbf A\mathbf H\mathbf W)
\end{aligned}
\label{equ:new_message_passing}
\end{equation}
Therefore, TransGNN has at least the expression ability of GNN.
\end{proof}

\begin{figure}[h]
\centering
\includegraphics[width=0.40\textwidth]{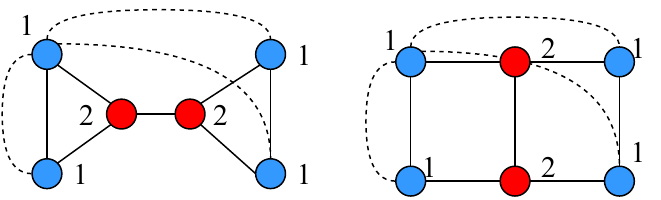}
\caption{Example illustration. The dotted line denotes the receptive field of the Transformer layer, and the color denotes the similarity relationship.}
\label{figure:example}
\end{figure}

\begin{theorem}
TransGNN can be more expressive than 1-WL Test.
\end{theorem}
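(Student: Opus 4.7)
The plan is to combine Theorem~1 with a concrete pair of non-isomorphic graphs that the 1-WL test cannot distinguish but on which TransGNN produces different representations; together with the emulation guarantee of Theorem~1, such a separating example immediately yields strict superiority over 1-WL. The separating pair I would pick is $G_1 = C_6$ (the 6-cycle) and $G_2 = 2\cdot C_3$ (two disjoint triangles). Both are 2-regular with the same number of nodes and edges, so the 1-WL color refinement assigns every node the identical stable color in both graphs, and by the classical equivalence between message-passing GNNs and 1-WL no standard GNN can tell these two graphs apart.

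The key step is to show that the positional encoding module already makes the Transformer-layer inputs differ on the two graphs. For the shortest-path-based encoding, the multiset of hop distances from any node in $G_2$ to the other nodes in its component is $\{0,1,1\}$, while from any node in $G_1$ it is $\{0,1,1,2,2,3\}$. Consequently the $\text{COMB}\circ\text{AGG}$ pipeline produces a distinct enriched attribute $\mathbf h_i$ for nodes in $G_1$ versus $G_2$, and with a standard injectivity assumption on the two-layer MLP realizing $\text{COMB}$, the subsequent Transformer and GNN layers preserve this distinction down to any permutation-invariant graph-level readout. A parallel argument via the PageRank-based encoding (whose values depend on the global stationary distribution and therefore see the absence of longer cycles in $G_2$) can serve as an independent backup, in case the SPE variant is disabled.

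The main obstacle I anticipate is the interaction with the top-$k$ attention sampling step: for small $k$ the sampled neighborhood in $C_6$ could coincidentally restrict to immediate neighbors and therefore mimic the one-hop multiset seen in $2\cdot C_3$. I would address this by observing that the similarity refinement $\mathbf S = \mathbf S + \alpha \hat{\mathbf A}\mathbf S$ inside the attention sampling module already injects two-hop structural information into the similarity scores, so for $k \geq 3$ the sample in $C_6$ strictly reaches beyond the one-hop neighborhood; alternatively, one may simply pick $k$ large enough to cover the connected component in the chosen example, which is a legitimate choice of the hyper-parameter. A secondary but routine point is to verify that the two-layer MLPs used for the positional encodings map the distinct scalar inputs $\{0,1,2,3\}$ to linearly independent vectors, which holds for generic weights by the usual universality argument. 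Once these two points are handled, the separating example goes through, and combining it with Theorem~1 yields the claim that TransGNN is strictly more expressive than 1-WL.
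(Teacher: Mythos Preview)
Your approach is essentially the same as the paper's: exhibit a pair of graphs that 1-WL cannot distinguish and argue that the shortest-path positional encoding separates them, so that together with Theorem~1 one gets strict superiority. The paper's own proof is much terser---it simply points to a figure with two (connected) graphs whose per-node shortest-path multisets are $\{0,1,3,3\}$ versus $\{0,1,2,3\}$---so your choice of $C_6$ versus $2\cdot C_3$ and your extra care about top-$k$ sampling and MLP injectivity go beyond what the paper provides; the only minor wrinkle is that $2\cdot C_3$ is disconnected, leaving some entries of the shortest-path matrix $\mathbf{P}$ undefined, which the paper's connected example avoids.
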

\begin{proof}
With the help of Transformer layer and positional encoding, TransGNN can aggregate more relevant information and structure information to improve the message passing process, which can be more expressive than 1-WL Test~\cite{douglas2011weisfeiler}. We give an illustration in \autoref{figure:example}. These two graphs cannot be distinguished by 1-WL Test. However, because the transformer layer expand the receptive field and the positional encoding can capture the local structure, they can be distinguish by TransGNN. For example the node in the top left gets the shortest path information \{0,1,3,3\} and \{0,1,2,3\} respectively.
\end{proof}

\section{Experiments}
\par To evaluate the effectiveness of our TransGNN, our experiments are designed to answer the following research questions:
\begin{itemize}[leftmargin=*,noitemsep,topsep=0pt]
    \item \textbf{RQ1}:  Can our proposed TransGNN outperform the state-of the-art baselines of different categories?
    \item \textbf{RQ2}: How do the key components of TransGNN (\textit{e.g.,} the attention sampling, postional encoding, the message passing update) contribute to the overall performance on different datasets?
    \item \textbf{RQ3}: How do different hyper-parameters affect TransGNN?
    \item \textbf{RQ4}: How does TransGNN perform under different GNN layers?
    \item \textbf{RQ5}: How efficient is TransGNN compared to the baselines?
\end{itemize}

\subsection{Experimental Settings}
\subsubsection{Datasets.}

We evaluate the proposed model on five real-world representative datasets (\textit{i.e.,} Yelp, Gowalla, Tmall, Amazon-Book, and MovieLens),  which vary significantly in domains and sparsity:
\begin{itemize}
    \item \textbf{Yelp}: This dataset contains user ratings on business venues collected from Yelp. Following other papers on implicit feedback~\cite{huang2021knowledge}, we treat users’ rated venues as interacted items and treat unrated venues as non-interacted items.
    \item \textbf{Gowalla}: It contains users’ check-in records on geographical locations obtained from Gowalla. This evaluation dataset is generated from the period between 2016 and 2019.
    \item \textbf{Tmall}: This E-commerce dataset is released by Tmall, containing users’ behaviors for online shopping. We collect the page-view interactions during December in 2017.
    \item \textbf{Amazon-Book}: Amazon-review is a widely used dataset for
product recommendation~\cite{he2016ups}. We select Amazon-book from the
collection. Similarly, we use the 10-core setting to ensure that each
user and item have at least ten interactions.
    \item \textbf{MovieLens}: This is a popular benchmark for
evaluating recommendation algorithms~\cite{harper2015movielens}. We adopt the well-established version, MovieLens 10m (\textbf{ML-10m}), which contains about 10 million ratings of 10681 movies by 71567 users. 
\end{itemize}

For dataset preprocessing, we follow the common practice in~\cite{huang2021knowledge,xia2022self,tang2018personalized}. For all datasets, we convert all numeric ratings or the presence of a review to implicit feedback of 1 (\textit{i.e.,} the user interacted with the item). After that, we group the interaction records by users and build the interaction sequence for each user by sorting these interaction records according to the timestamps. To ensure the quality of the dataset, following the common practice~\cite{huang2021knowledge,xia2022self,tang2018personalized,sun2019bert4rec}, we filtered out users and items with too few interactions. The statistics of the datasets are summarized in Table~\ref{table:statustics_dataset}.

\subsubsection{Evaluation Protocols.} 
Following the recent CF models~\cite{he2020lightgcn,wu2021self}, we adopt all-rank evaluation protocol. Under this protocol, during the evaluation of a user, both the positive items from the test set and all the non-interacted items are collectively ranked and assessed. For the assessment of recommendation performance, we have opted for widely recognized metrics, namely, \textbf{Recall@N} and Normalized Discounted Cumulative Gain (\textbf{NDCG@N})~\cite{ren2020sequential,wang2019neural}. The value of N in these metrics is set to 20 and 40.

\begin{table}[t]
\caption{Statistics of datasets after preprocessing}
\label{table:statustics_dataset}
\begin{tabular}{c|c|c|c|c}
\toprule
Datasets & \# Users & \# Items & \# Interactions & Density  \\
\midrule
Yelp  & 29,601  & 24,734     & 1,527,326     & $2.1\times 10^{-3}$     \\
Gowalla & 50,821  & 24,734      & 1,069,128     & $4.0\times 10^{-4}$    \\
Tmall  & 47,939  & 41,390     & 2,357,450      & $1.2\times 10^{-3}$     \\
Amazon-Book   & 78,578  & 77,801   & 3,200,224     & $5.2\times 10^{-4}$    \\
ML-10M   & 69,878  & 10,196  &  9,998,816    & $1.4\times 10^{-2}$   \\
\bottomrule
\end{tabular}
\end{table}

\begin{table*}[t]
\huge
\centering
\caption{Comparison of recommendation performance with baseline models in numerical values. The results of the best performing baseline are underlined. The numbers in bold indicate statistically significant improvement (p < .01) by the pairwise t-test comparisons over the other baselines. $^\star$, $^*$, $^\dagger$ and $^\ddagger$ denote GNN-based, HGNN-based, SSL-enhanced and Transformer-GNN-based models, respectively.}
\label{tab:overall}
\renewcommand{\arraystretch}{1.5}
\resizebox{0.86\linewidth}{!}{
\begin{tabular}{c|l|cccccccccccc|c|c}

\hline
         \multirow{2}*{Dataset} & \multirow{2}*{Metric} & (a) & (b) & (c) & (d) & (e) & (f) & (g) & (h) & (i) & (j) & (k) & (l) & (m) & (n)\\
         \cline{3-16}
         &  & AutoR & GCMC$^\star$ & PinSage$^\star$ & NGCF$^\star$ & LightGCN$^\star$ & GCCF$^\star$ & HyRec$^*$ & DHCF$^*$ & MHCN$^\dagger$ & SLRec$^\dagger$ & SGL$^\dagger$ & SHT$^\ddagger$ & TransGNN & Improv.\\
         \hline
         \hline
    
\multirow{4}{*}{Yelp}     
& Recall@20        & 0.0348 & 0.0462 & 0.0471  & 0.0579	  & 0.0653	 & 0.0462 & 0.0467   & 0.0458 & 0.0646   &  0.0639   &  \underline{0.0675}   &  0.0651   &  \textbf{0.0817} & +21.04\%   \\
                          & Recall@40        & 0.0515 & 0.0836 & 0.0895  & 0.1131  & 0.1203 & 0.0930 & 0.0891   & 0.0851 & 0.1226  &  0.1221   &  \underline{0.1269}   &  0.1091    &  \textbf{0.1506}   &  +18.68\%    \\
                           & NDCG@20        & 0.0236 & 0.0379	 & 0.0393	   & 0.0477  & 0.0532 & 0.0398 & 0.0355   & 0.0376 & 0.0410  &  0.0358   &  \underline{0.0555}   & 0.0546    &  \textbf{0.0734}   &  +32.25\%    \\
                            & NDCG@40        & 0.0358 & 0.0443 & 0.0417   & 0.0693  & 0.0828 & 0.0704 & 0.0628   & 0.0716 & 0.0810  &  0.0806   &  \underline{0.0871} & 0.0709     &  \textbf{0.0999}   &  +14.70\%    \\
\midrule
\multirow{4}{*}{Tmall}   
& Recall@20        & 0.0235 & 0.0292 & 0.0373  & 0.0402  & \underline{0.0525} & 0.0327  & 0.0333   & 0.0385 & 0.0405   &  0.0394  &  0.0436 &0.0387    & \textbf{0.0608}   &  +15.81\%    \\
                          & Recall@40        & 0.0405 & 0.0511 & 0.0594   & 0.0613  & \underline{0.0867} & 0.0619  & 0.0677   & 0.0784 & 0.0783  &  0.0782   &  0.0802 & 0.0645  &  \textbf{0.0989}   &  +14.07\%    \\
                           & NDCG@20        & 0.0120 & 0.0147  & 0.0221  & 0.0294  & \underline{0.0372} & 0.0257 & 0.0262   & 0.0318 & 0.0337  &  0.0354   &  0.0363 & 0.0262  &  \textbf{0.0422}   &  +13.44\%    \\
                            & NDCG@40        & 0.0103 & 0.0200 & 0.0310   & 0.0405  & \underline{0.0508} & 0.0402 & 0.0428   & 0.0417 & 0.0473  &  0.0413  &  0.0505 & 0.0352  &  \textbf{0.0555}   &  +9.25\%    \\
\midrule
\multirow{4}{*}{Gowalla}     
& Recall@20        & 0.1298 & 0.1395 & 0.1380	  & 0.1570	  & \underline{0.1820} & 0.1577 & 0.1649   & 0.1642 & 0.1710   &  0.1656   &  0.1709 & 0.1232    & \textbf{0.1887}   &  +3.68\%    \\
                          & Recall@40        & 0.1359 & 0.1783  & 0.1852   & 0.2270  & \underline{0.2531} & 0.2348 & 0.2333   & 0.2422 & 0.2347  &  0.2331   &  0.2502 & 0.1804     &  \textbf{0.2640}   &  +4.31\%    \\
                           & NDCG@20        & 0.1178 & 0.1204 & 0.1196	  & 0.1327	  & \underline{0.1547} & 0.1285 & 0.1452   & 0.1453 & 0.1510  &  0.1473   &  0.1529 & 0.0731     &  \textbf{0.1602}   &  +3.56\%    \\
                            & NDCG@40        & 0.0963 & 0.1060 & 0.1035   & 0.1102  & \underline{0.1270} & 0.1150 & 0.1162   & 0.1256 & 0.1230  &  0.1213   &  0.1259 & 0.0881   &  \textbf{0.1318}   &  +3.78\%    \\
\midrule
\multirow{4}{*}{Amazon-Book}     & Recall@20        & 0.0287 & 0.0288 & 0.0282	  & 0.0344  & 0.0411 & 0.0415  & 0.0427   & 0.0411 & 0.0552   &  0.0521  &  0.0478 &\underline{0.0740}    & \textbf{0.0801}   &  +8.24\%    \\
                          & Recall@40        & 0.0492 & 0.0539 & 0.0625   & 0.0590  & 0.0741 & 0.0772  & 0.0793   & 0.0824 & 0.0846  &  0.0815   &  0.1023 & \underline{0.1164}  &  \textbf{0.1239}   &  +6.44\%    \\
                           & NDCG@20        & 0.0156 & 0.0224  & 0.0219	  & 0.0263	  & 0.0318 & 0.0308 & 0.0330   & 0.0312 & 0.0384  &  0.0356   &  0.0379 & \underline{0.0553}  &  \textbf{0.0603}   &  +9.04\%    \\
                            & NDCG@40        & 0.0228 & 0.0336 & 0.0392   & 0.0364  & 0.0461 & 0.0440 & 0.0432   & 0.0414 & 0.0492  &  0.0475  &  0.0531 & \underline{0.0690}  &  \textbf{0.0748}   &  +8.41\%    \\
\midrule
\multirow{4}{*}{ML-10M}     & Recall@20        & 0.1932 & 0.2015 & 0.2251  & 0.2136  & 0.2402 & 0.2356  & 0.2371   & 0.2368 & 0.2509   &  0.2415  &  0.2474 &\underline{0.2546}    & \textbf{0.2668}   &  +4.79\%    \\
                          & Recall@40        & 0.2593 & 0.2726 & 0.3050   & 0.3127  & 0.3406 & 0.3321  & 0.3376   & 0.3263 & 0.3424  &  0.3380   &  0.3603 & \underline{0.3794}  &  \textbf{0.3962}   &  +4.43\%    \\
                           & NDCG@20        & 0.1903 & 0.2052  & 0.2359  & 0.2218  & 0.2704 & 0.2682 & 0.2691   & 0.2697 & 0.2761  &  0.2725   &  0.2813 & \underline{0.3038}  &  \textbf{0.3223}   &  +6.09\%    \\
                            & NDCG@40        & 0.2180 & 0.2349 & 0.2452   & 0.2421  & 0.2874 & 0.2802 & 0.2814   & 0.2846 & 0.3007  &  0.2932  &  0.3194 & \underline{0.3384}  &  \textbf{0.3525}   &  +4.17\%    \\                       
\bottomrule
\end{tabular}}
\end{table*}

\subsubsection{Baselines.}
\par We compare TransGNN with five types of baselines: (1) Autoencoder-based method, \textit{i.e.,} AutoR~\cite{sedhain2015autorec}. (2) GNN-based methods, which includes GCMC~\cite{berg2017graph}, PinSage~\cite{ying2018graph}, NGCF~\cite{wang2019neural}, LightGCN~\cite{he2020lightgcn} and GCCF~\cite{chen2020revisiting}. (3) Hypergraph-based methods,  which includes HyRec~\cite{wang2020next} and DHCF~\cite{ji2020dual}. (4) Self-supervised learning enhanced GNN-based methods, which includes MHCN~\cite{yu2021self}, SLRec~\cite{yao2021self} and SGL~\cite{wu2021self}. (5) To verify the effectiveness of our integration of Transformer and GNNs, we also include the Hypergraph Transformer enhanced method,~\textit{i.e.,} SHT~\cite{xia2022self} in comparison. 

\subsubsection{Reproducibility.} 
\par We use three Transformer layers with two GNN layers sandwiched between them. For the transformer layer, multi-head attention is used. For the GNN layer, we use GraphSAGE as the backbone model. We adopt the message passing based attention update in our main experiment. 
We consider attention sampling size $d\in \{5, 10, 15, 20, 25, 30, 35\}$ , number of heads $h \in \{2, 4, 8, 16, 32\}$, dropout ratio $d\in [0, 1]$ and weight decay $d\in [0, 1e^{-2}]$. 
We apply grid search to find the optimal hyper-parameters for each model. We use Adam to optimize our model. We train each model with early stop strategies until the validation recall value does not improve for 20 epochs on a single NVIDIA A100 SXM4 80GB GPU. The average results of five runs are reported. 

\subsection{Overall Performance Comparison (RQ1)}
\par In this section, we validate the effectiveness of our TransGNN framework by conducting the overall performance evaluation on the five datasets and comparing TransGNN with various baselines. 
The results are presented in Table~\ref{tab:overall}. 

Compared with Autoencoder-based methods like AutoR, we observe that GNN-based methods, including TransGNN, exhibit superior performance. This can be largely attributed to the inherent ability of GNNs to adeptly navigate and interpret the complexities of graph-structured data. Autoencoders, while efficient in latent feature extraction, often fall short in capturing the relational dynamics inherent in user-item interactions, a forte of GNNs. When considering hypergraph neural networks (HGNNs) such as HyRec and DHCF, it's apparent that they surpass many GNN-based methods (\textit{e.g.,} GCMC, PinSage, NGCF, STGCN). The key to this enhanced performance lies in their ability to capture high-order and global graph connectivity, a dimension where conventional GNNs often exhibit limitations. This observation underscores the necessity of models that can comprehend more intricate and interconnected graph structures in recommendation systems. TransGNN stands out by integrating the Transformer's strengths, particularly in expanding the receptive field. This integration enables TransGNN to focus on a broader and more relevant set of nodes, thereby unlocking the latent potential of GNNs in global relationship learning. This synthesis is particularly effective in capturing long-range dependencies, which is a notable limitation in standalone GNNs.

In the realm of Self-Supervised Learning (SSL), methods like MHCN, SLRec, and SGL have shown improvements in graph-based Collaborative Filtering models. These advancements are primarily due to the incorporation of augmented learning tasks which introduce beneficial regularization to the parameter learning processes. This strategy effectively mitigates overfitting risks based on the input data itself. However, TransGNN surpasses these SSL baselines, a success we attribute to its global receptive field facilitated by the Transformer architecture. This global perspective enables the adaptive aggregation of information at a larger scale, in contrast to SSL-based methods which are constrained to batch-level sampling, limiting their scope. Moreover, SSL methods often lack the robustness required to effectively tackle data noise. TransGNN, with its attention sampling module, adeptly addresses this challenge by filtering out irrelevant nodes, thereby refining the graph structure and significantly reducing noise influence.

An intriguing aspect of TransGNN's performance is observed under different top-K settings in evaluation metrics like Recall@K and NDCG@K. Notably, TransGNN demonstrates more substantial performance improvements over baseline models when K is smaller. This is particularly relevant considering the position bias in recommendation systems, where users are more inclined to focus on higher-positioned items in recommendation lists. TransGNN's efficacy in these scenarios suggests it is well-suited for generating user-friendly recommendations that align closely with user preferences, especially at the top of the recommendation list.

\begin{table}[t]
\centering
\Large
  \caption{Ablation studies on different components.}
  \label{table:ablation_study}
  \resizebox{\linewidth}{!}{
  \begin{tabular}{c|c|c c|c c|c c}
    \hline
    \multirow{2}*{Category} & Data & \multicolumn{2}{c|}{\textit{Yelp}} & \multicolumn{2}{c|}{\textit{Gowalla}} & \multicolumn{2}{c}{\textit{Tmall}} \\ \cline{2-8}
              ~ & Variants & Recall & NDCG  & Recall & NDCG & Recall & NDCG \\
    \hline
    Attention Sampling & -AS & 0.0686   & 0.0583   & 0.1492   & 0.0961   &  0.0418  &  0.0349  \\ \hline
    \multirow{4}*{Positional Encoding} & -SPE & 0.0910  & 0.0760  & 0.1836  & 0.1104  & 0.0591  & 0.0412  \\
        ~ & -DE &  0.0922 & 0.0781  & 0.1873  & 0.1109  & 0.0602  & 0.0416  \\
        ~ & -PRE & 0.0907  & 0.0772  & 0.1845  & 0.1106  & 0.0595  &  0.0417 \\\cline{2-8}
        ~ & -PE & 0.0704  & 0.0610  & 0.1609  & 0.1013  & 0.0437  & 0.0375  \\ \hline
    \multirow{2}*{TransGNN} & -Trans & 0.0479  & 0.0391  & 0.0978  & 0.0556  & 0.0237  & 0.0159  \\
    ~ & -GNN & 0.0390  & 0.0316  & 0.0574  & 0.0341  & 0.0209  & 0.0126  \\ \hline
    \multirow{2}*{Attention Update}  & +RW & 0.0924  & 0.0782  & 0.1880  & 0.1109  &  0.0601 & 0.0417  \\
    ~ & -MP & 0.0723  & 0.0611  & 0.1550  & 0.1007  & 0.0462  & 0.0360  \\ \hline
    Original & TransGNN & \textbf{0.0927}  & \textbf{0.0787}  & \textbf{0.1887}  & \textbf{0.1121}  & \textbf{0.0608}  & \textbf{0.0422}  \\ \hline
  \end{tabular}
  }
\end{table}

\subsection{Ablation Study (RQ2)}

\par To evaluate the effectiveness of the proposed modules in TransGNN, we systematically remove key techniques within its four primary components: the attention sampling module (-AS), the positional encoding module (-PE), the core TransGNN module (-Trans and -GNN), and the attention update module (-MP). Additionally, we ablate specific elements within the positional encoding and attention update modules by removing shortest-path-hop-based (-SPE), degree-based (-DE), and PageRank-based (-PRE) encodings, and replacing the message passing attention update with a random walk-based update (+RW).
We use GraphSAGE as the backbone GNN to report the performance when ignoring different components. All the settings are the same as the default. The variants are re-trained for test on Yelp, Gowalla and Tmall datasets. From Table~\ref{table:ablation_study}, we have the following major conclusions:

\begin{itemize}[leftmargin=*,noitemsep,topsep=0pt]
    \item 
    Removing the attention sampling module (-AS) results in a noticeable performance decline across all datasets, highlighting the need for focused attention to effectively manage the complex user-item interaction space. This module is crucial for filtering out irrelevant nodes within a global attention context. Without it, the model's node selection becomes less precise, resulting in less targeted recommendations. 
    \item The exclusion of the positional encoding module (-PE) leads to compromised results, indicating that the Transformer layer alone cannot capture the structure information adequately. This is further substantiated by the reduced performance observed when individual components of the positional encoding—shortest-path-hop-based (-SPE), degree-based (-DE), and PageRank-based (-PRE)—are separately removed. Each of these encodings contributes uniquely to the model’s understanding of the graph topology, reflecting user-item proximity, interaction frequency, and structural importance, respectively. 
    \item The simultaneous necessity of both the Transformer and GNN layers is clearly demonstrated when either is removed (-Trans and -GNN). The significant decline in performance underscores the synergistic relationship between these two layers. The Transformer layer, with its expansive receptive field, brings a global perspective to the table, while the GNN layer contributes a comprehensive understanding of graph topology. Their combined operation is crucial for a holistic approach to user modeling in TransGNN, blending global and local insights.
    \item In examining the sampling update strategies, it is observed that the random walk-based update (+RW) does not perform as well as the message passing-based update. This could be attributed to the inherent noise in the edges, which might lead the random walk to less relevant nodes, highlighting the superiority of a more structured update mechanism.
    \item Finally, the lack of the message passing update strategy (-MP) leads to a performance decline, indicating that static attention samples may become outdated during training. The dynamic nature of message passing ensures that attention samples stay current with evolving user-item interactions, which is essential for maintaining the accuracy and relevance of TransGNN's recommendations, allowing it to adapt to changes in user preferences and item attributes.
\end{itemize}


\begin{figure}[h]
\centering
\subfigure[Sampling size influence.]{
\includegraphics[width=0.25\textwidth]{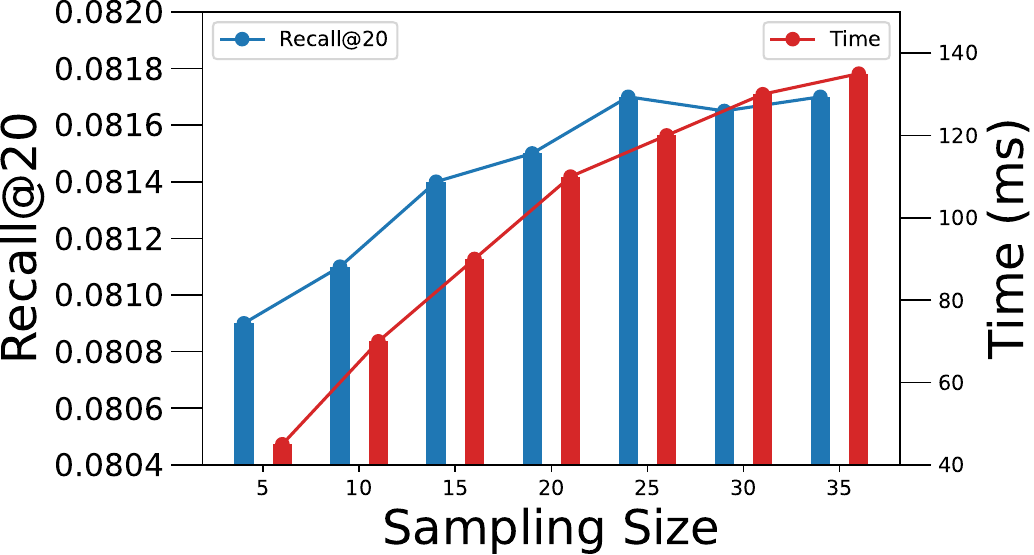}
\label{figure:sampling_size_1}
}
\subfigure[Best size of different data scale.]{
\includegraphics[width=0.20\textwidth]{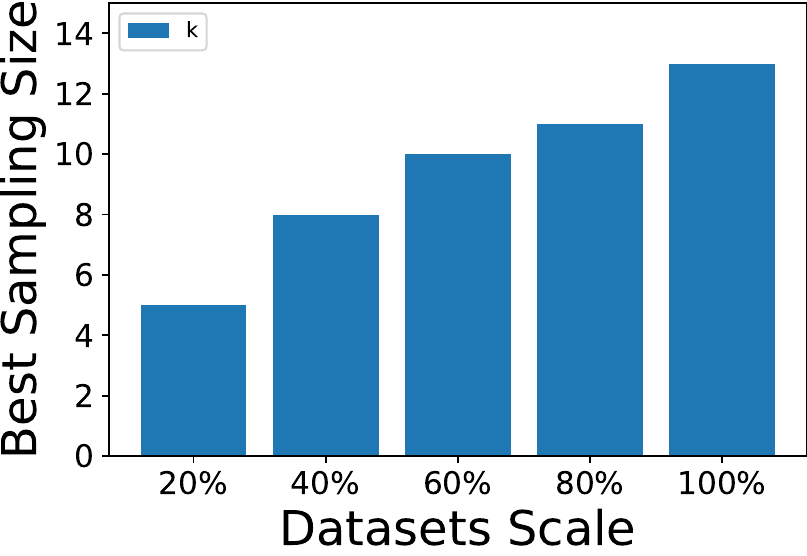}
\label{figure:sampling_size_2}
}
\caption{Different sampling size influence and best sampling size for different graph scale.}
\label{figure:sampling_size}
\end{figure}

\subsection{Attention Sampling Study (RQ3)}

\par We study the influence of attention sampling size using Yelp as an example. The results can be found in \autoref{figure:sampling_size}. We have the following observations:
\begin{itemize}[leftmargin=*,noitemsep,topsep=0pt]
    \item  In Figure \ref{figure:sampling_size_1}, as $k$ increases from 5 to 35, a trend is observed where fewer nodes reduce computational load but degrade performance. This decline is due to insufficient nodes capturing essential user-item connections, impacting the model’s accuracy in recommendations. Conversely, overly extensive node sampling fails to proportionally enhance performance, indicating a saturation point where extra nodes add computational strain without improving predictive accuracy.
    \item 
    In Figure \ref{figure:sampling_size_2}, we adjust the graph scale to identify the optimal sampling size. Interestingly, we find that a relatively small sampling size is found to be sufficient for achieving good performance. 
    This finding suggests that TransGNN can operate efficiently without necessitating extensive computational resources. 
    The ability to extract meaningful insights from a smaller subset of nodes underscores the effectiveness in identifying the most pertinent information from the user-item interaction graph.
    \item Furthermore, when the graph size is varied, we observe that only a slight increase in sample numbers is needed as the graph expands, indicating TransGNN's scalability. Despite increasing graph complexity, the model does not demand a large increase in resources to sustain performance. This scalability is essential for practical applications with large and varied dataset sizes.
\end{itemize}

\begin{table}[t]
\centering
\Large
  \caption{Performance comparison between SHT and TransGNN at different layers.}
  \label{table:oversmoothing}
  \resizebox{0.7\linewidth}{!}{
  \begin{tabular}{c|c|c c}
    \hline
    \multirow{2}*{\textbf{Layer \#}} & \multirow{2}*{\textbf{Method}} & \multicolumn{2}{c}{\textbf{Amazon-Book}} \\ \cline{3-4}
              ~ & & \textbf{Recall@20} & \textbf{NDCG@20}  \\
    \hline
    \multirow{2}*{\textbf{1 Layer}} & SHT & 0.0712  & 0.0497   \\
        ~ & TransGNN &  0.0723 & 0.0502  \\\hline
    \multirow{2}*{\textbf{2 Layer}} & SHT & 0.0740  & 0.0553   \\
        ~ & TransGNN &  0.0772 & 0.0565  \\\hline
    \multirow{2}*{\textbf{3 Layer}} & SHT & 0.0725  & 0.0541   \\
        ~ & TransGNN &  0.0801 & 0.0603  \\\hline
    \multirow{2}*{\textbf{4 Layer}} & SHT & 0.0713  & 0.0513   \\
        ~ & TransGNN &  0.0805 & 0.0606  \\\hline
    \multirow{2}*{\textbf{5 Layer}} & SHT & 0.0684  & 0.0467   \\
        ~ & TransGNN &  0.0807 & 0.0609  \\\hline
  \end{tabular}
  }
\end{table}

\subsection{ Study on the number of GNN Layers (RQ4)}

We evaluate how varying the number of GNN layers affects performance. Table~\ref{table:oversmoothing} shows that SHT's performance improves with up to two layers but deteriorates with additional layers, indicating over-smoothing in deeper architectures. Conversely, TransGNN demonstrates enhanced performance as layer count increases, reflecting its ability to counteract over-smoothing and effectively capture wider graph dependencies. This contrast highlights TransGNN's  capability in managing the complexities of deeper graph networks, affirming its robustness for recommendation systems.

\subsection{Complexity and Efficiency Analysis (RQ5)}

 We conduct an analysis on TransGNN's execution efficacy, as delineated in Table~\ref{tab:complexity}. Comparing with GNN-based (\textit{i.e.,} NGCF), Hypergraph-based (\textit{i.e.,} HyRec) and Transformer-GNN-based baselines (\textit{i.e., } SHT) , we have the following observations: 
\begin{itemize}[leftmargin=*,noitemsep,topsep=0pt]
    \item TransGNN requires less GPU memory due to its node sampling strategy in the Transformer component, which avoids full-graph attention calculations. By focusing only on the most relevant nodes, TransGNN reduces the computational load typical in large graph processing. This method conserves memory and enhances the model's agility and adaptability to different dataset sizes.
    \item TransGNN has better efficiency in
both training ($30\%~40\%$ less time) and inference ($30\%-40\%$ less time)
compared with NGCF. 
This substantial reduction in processing time renders TransGNN an especially practical choice for scenarios where rapid response is crucial, such as real-time recommendation systems or applications with frequent model updates. The improved performance ensures that systems can operate more smoothly and efficiently, adapting more quickly to new user interactions.
    \item Furthermore,  
    TransGNN achieves comparable, if not superior, training and inference times relative to DHCF and SHT, while simultaneously maintaining a smaller memory footprint. This balance between speed and resource utilization is crucial in scenarios where computational resources are limited. The lower memory requirement, quantified at $20\%~30\%$ less than the other models, underscores TransGNN's suitability for deployment in environments with constrained computational resources.
    \item 
    Additionally, TransGNN's minimal Floating Point Operations (FLOPs) compared to all baselines is a testament to its computational efficiency. This aspect is particularly important for deployment on low-resource devices, where managing computational overhead is critical. The lower FLOPs indicate that TransGNN requires fewer computational resources to perform the same tasks as its counterparts, which is a significant advantage in resource-constrained environments.
\end{itemize}

\begin{table}[t]
    \caption{Parameters number and execution efficiency analysis of models on Yelp dataset.} 
    \label{tab:complexity}
    \centering
    \resizebox{\linewidth}{!}{
    \begin{tabular}{l|c|c|c|c}
    \hline
    Yelp & TransGNN & NGCF & HyRec & SHT  \\ 
    \hline
    \hline
     \# Parameters & 1.77M & 2.30M & 1.91M & 1.75M\\ 
     \# GPU Memory & 1.79GB & 0.62GB & 2.32GB  &  2.33GB\\
     \# FLOPs ($\times10^6$) & 3.74 & 8.41 & 5.36 & 4.89\\ 
     Training Cost (per epoch) & 4.75s & 7.88s & 3.65s & 3.97s\\
     Inference Cost (per epoch)  & 16.95s & 27.07s & 12.48s & 12.57s \\
     \hline
     Recall@20 & \textbf{0.0927}  & 0.0294  & 0.0472 & 0.0651 \\
     \hline
    \end{tabular}
    }
\end{table}

\section{Conclusion}
\par In this paper, we propose TransGNN to help GNN expand its receptive field with low overhead. We first use three kinds of positional encoding to capture the structure information for the transformer layer. Then the Transformer layer and GNN layer are used alternately to make every node focus on the most relevant samples. Two efficient sample update strategies are proposed for medium- and large-scale graphs, which reduce the computational cost. At last, we conduct experiments on five datasets to show the effectiveness of TransGNN compared to the state-of-the-art baselines.

\section*{Acknowledgements}
This work was supported by the Natural Science Foundation of China (No. 62372057, 62272200, 62172443, U22A2095).

\bibliographystyle{ACM-Reference-Format}
\balance
\bibliography{citation}


\begin{thebibliography}{56}


\ifx \showCODEN    \undefined \def \showCODEN     #1{\unskip}     \fi
\ifx \showDOI      \undefined \def \showDOI       #1{#1}\fi
\ifx \showISBNx    \undefined \def \showISBNx     #1{\unskip}     \fi
\ifx \showISBNxiii \undefined \def \showISBNxiii  #1{\unskip}     \fi
\ifx \showISSN     \undefined \def \showISSN      #1{\unskip}     \fi
\ifx \showLCCN     \undefined \def \showLCCN      #1{\unskip}     \fi
\ifx \shownote     \undefined \def \shownote      #1{#1}          \fi
\ifx \showarticletitle \undefined \def \showarticletitle #1{#1}   \fi
\ifx \showURL      \undefined \def \showURL       {\relax}        \fi
\providecommand\bibfield[2]{#2}
\providecommand\bibinfo[2]{#2}
\providecommand\natexlab[1]{#1}
\providecommand\showeprint[2][]{arXiv:#2}

\bibitem[Allaoui and Artiba(2009)]%
        {allaoui2009johnson}
\bibfield{author}{\bibinfo{person}{Hamid Allaoui} {and} \bibinfo{person}{AbdelHakim Artiba}.} \bibinfo{year}{2009}\natexlab{}.
\newblock \showarticletitle{Johnson's algorithm: A key to solve optimally or approximately flow shop scheduling problems with unavailability periods}.
\newblock \bibinfo{journal}{\emph{International Journal of Production Economics}} \bibinfo{volume}{121}, \bibinfo{number}{1} (\bibinfo{year}{2009}), \bibinfo{pages}{81--87}.
\newblock


\bibitem[Alon and Yahav(2021)]%
        {Uri2Y21}
\bibfield{author}{\bibinfo{person}{Uri Alon} {and} \bibinfo{person}{Eran Yahav}.} \bibinfo{year}{2021}\natexlab{}.
\newblock \showarticletitle{On the Bottleneck of Graph Neural Networks and its Practical Implications}. In \bibinfo{booktitle}{\emph{9th International Conference on Learning Representations, {ICLR} 2021, Virtual Event, Austria, May 3-7, 2021}}. \bibinfo{publisher}{OpenReview.net}.
\newblock
\urldef\tempurl%
\url{https://openreview.net/forum?id=i80OPhOCVH2}
\showURL{%
\tempurl}


\bibitem[Balcilar et~al\mbox{.}(2021)]%
        {BalcilarHGVAH21}
\bibfield{author}{\bibinfo{person}{Muhammet Balcilar}, \bibinfo{person}{Pierre H{\'{e}}roux}, \bibinfo{person}{Benoit Ga{\"{u}}z{\`{e}}re}, \bibinfo{person}{Pascal Vasseur}, \bibinfo{person}{S{\'{e}}bastien Adam}, {and} \bibinfo{person}{Paul Honeine}.} \bibinfo{year}{2021}\natexlab{}.
\newblock \showarticletitle{Breaking the Limits of Message Passing Graph Neural Networks}. In \bibinfo{booktitle}{\emph{Proceedings of the 38th International Conference on Machine Learning, {ICML} 2021, 18-24 July 2021, Virtual Event}} \emph{(\bibinfo{series}{Proceedings of Machine Learning Research}, Vol.~\bibinfo{volume}{139})}, \bibfield{editor}{\bibinfo{person}{Marina Meila} {and} \bibinfo{person}{Tong Zhang}} (Eds.). \bibinfo{publisher}{{PMLR}}, \bibinfo{pages}{599--608}.
\newblock
\urldef\tempurl%
\url{http://proceedings.mlr.press/v139/balcilar21a.html}
\showURL{%
\tempurl}


\bibitem[Berg et~al\mbox{.}(2017)]%
        {berg2017graph}
\bibfield{author}{\bibinfo{person}{Rianne van~den Berg}, \bibinfo{person}{Thomas~N Kipf}, {and} \bibinfo{person}{Max Welling}.} \bibinfo{year}{2017}\natexlab{}.
\newblock \showarticletitle{Graph convolutional matrix completion}.
\newblock \bibinfo{journal}{\emph{arXiv preprint arXiv:1706.02263}} (\bibinfo{year}{2017}).
\newblock


\bibitem[Cai and Wang(2020)]%
        {AnoteOnOversmooth}
\bibfield{author}{\bibinfo{person}{Chen Cai} {and} \bibinfo{person}{Yusu Wang}.} \bibinfo{year}{2020}\natexlab{}.
\newblock \showarticletitle{A Note on Over-Smoothing for Graph Neural Networks}.
\newblock \bibinfo{journal}{\emph{CoRR}}  \bibinfo{volume}{abs/2006.13318} (\bibinfo{year}{2020}).
\newblock
\showeprint[arXiv]{2006.13318}
\urldef\tempurl%
\url{https://arxiv.org/abs/2006.13318}
\showURL{%
\tempurl}


\bibitem[Chen et~al\mbox{.}(2023)]%
        {chen2023bias}
\bibfield{author}{\bibinfo{person}{Jiawei Chen}, \bibinfo{person}{Hande Dong}, \bibinfo{person}{Xiang Wang}, \bibinfo{person}{Fuli Feng}, \bibinfo{person}{Meng Wang}, {and} \bibinfo{person}{Xiangnan He}.} \bibinfo{year}{2023}\natexlab{}.
\newblock \showarticletitle{Bias and debias in recommender system: A survey and future directions}.
\newblock \bibinfo{journal}{\emph{ACM Transactions on Information Systems}} \bibinfo{volume}{41}, \bibinfo{number}{3} (\bibinfo{year}{2023}), \bibinfo{pages}{1--39}.
\newblock


\bibitem[Chen et~al\mbox{.}(2020)]%
        {chen2020revisiting}
\bibfield{author}{\bibinfo{person}{Lei Chen}, \bibinfo{person}{Le Wu}, \bibinfo{person}{Richang Hong}, \bibinfo{person}{Kun Zhang}, {and} \bibinfo{person}{Meng Wang}.} \bibinfo{year}{2020}\natexlab{}.
\newblock \showarticletitle{Revisiting graph based collaborative filtering: A linear residual graph convolutional network approach}. In \bibinfo{booktitle}{\emph{Proceedings of the AAAI conference on artificial intelligence}}, Vol.~\bibinfo{volume}{34}. \bibinfo{pages}{27--34}.
\newblock


\bibitem[Chen et~al\mbox{.}(2009)]%
        {chen2009ranking}
\bibfield{author}{\bibinfo{person}{Wei Chen}, \bibinfo{person}{Tie-Yan Liu}, \bibinfo{person}{Yanyan Lan}, \bibinfo{person}{Zhi-Ming Ma}, {and} \bibinfo{person}{Hang Li}.} \bibinfo{year}{2009}\natexlab{}.
\newblock \showarticletitle{Ranking measures and loss functions in learning to rank}.
\newblock \bibinfo{journal}{\emph{Advances in Neural Information Processing Systems}}  \bibinfo{volume}{22} (\bibinfo{year}{2009}).
\newblock


\bibitem[Chien et~al\mbox{.}(2021)]%
        {GPRGNN}
\bibfield{author}{\bibinfo{person}{Eli Chien}, \bibinfo{person}{Jianhao Peng}, \bibinfo{person}{Pan Li}, {and} \bibinfo{person}{Olgica Milenkovic}.} \bibinfo{year}{2021}\natexlab{}.
\newblock \showarticletitle{Adaptive Universal Generalized PageRank Graph Neural Network}. In \bibinfo{booktitle}{\emph{9th International Conference on Learning Representations, {ICLR} 2021, Virtual Event, Austria, May 3-7, 2021}}. \bibinfo{publisher}{OpenReview.net}.
\newblock


\bibitem[Collins et~al\mbox{.}(2018)]%
        {collins2018study}
\bibfield{author}{\bibinfo{person}{Andrew Collins}, \bibinfo{person}{Dominika Tkaczyk}, \bibinfo{person}{Akiko Aizawa}, {and} \bibinfo{person}{Joeran Beel}.} \bibinfo{year}{2018}\natexlab{}.
\newblock \showarticletitle{A study of position bias in digital library recommender systems}.
\newblock \bibinfo{journal}{\emph{arXiv preprint arXiv:1802.06565}} (\bibinfo{year}{2018}).
\newblock


\bibitem[Douglas(2011)]%
        {douglas2011weisfeiler}
\bibfield{author}{\bibinfo{person}{Brendan~L Douglas}.} \bibinfo{year}{2011}\natexlab{}.
\newblock \showarticletitle{The weisfeiler-lehman method and graph isomorphism testing}.
\newblock \bibinfo{journal}{\emph{arXiv preprint arXiv:1101.5211}} (\bibinfo{year}{2011}).
\newblock


\bibitem[Dwivedi and Bresson(2020)]%
        {dwivedi2020generalization}
\bibfield{author}{\bibinfo{person}{Vijay~Prakash Dwivedi} {and} \bibinfo{person}{Xavier Bresson}.} \bibinfo{year}{2020}\natexlab{}.
\newblock \showarticletitle{A generalization of transformer networks to graphs}.
\newblock \bibinfo{journal}{\emph{arXiv preprint arXiv:2012.09699}} (\bibinfo{year}{2020}).
\newblock


\bibitem[Fan et~al\mbox{.}(2021)]%
        {fan2021continuous}
\bibfield{author}{\bibinfo{person}{Ziwei Fan}, \bibinfo{person}{Zhiwei Liu}, \bibinfo{person}{Jiawei Zhang}, \bibinfo{person}{Yun Xiong}, \bibinfo{person}{Lei Zheng}, {and} \bibinfo{person}{Philip~S Yu}.} \bibinfo{year}{2021}\natexlab{}.
\newblock \showarticletitle{Continuous-time sequential recommendation with temporal graph collaborative transformer}. In \bibinfo{booktitle}{\emph{Proceedings of the 30th ACM international conference on information \& knowledge management}}. \bibinfo{pages}{433--442}.
\newblock


\bibitem[Guo et~al\mbox{.}(2022)]%
        {guo2022evolutionary}
\bibfield{author}{\bibinfo{person}{Jiayan Guo}, \bibinfo{person}{Peiyan Zhang}, \bibinfo{person}{Chaozhuo Li}, \bibinfo{person}{Xing Xie}, \bibinfo{person}{Yan Zhang}, {and} \bibinfo{person}{Sunghun Kim}.} \bibinfo{year}{2022}\natexlab{}.
\newblock \showarticletitle{Evolutionary Preference Learning via Graph Nested GRU ODE for Session-based Recommendation}. In \bibinfo{booktitle}{\emph{Proceedings of the 31st ACM International Conference on Information \& Knowledge Management}}. \bibinfo{pages}{624--634}.
\newblock


\bibitem[Harper and Konstan(2015)]%
        {harper2015movielens}
\bibfield{author}{\bibinfo{person}{F~Maxwell Harper} {and} \bibinfo{person}{Joseph~A Konstan}.} \bibinfo{year}{2015}\natexlab{}.
\newblock \showarticletitle{The movielens datasets: History and context}.
\newblock \bibinfo{journal}{\emph{Acm transactions on interactive intelligent systems (tiis)}} \bibinfo{volume}{5}, \bibinfo{number}{4} (\bibinfo{year}{2015}), \bibinfo{pages}{1--19}.
\newblock


\bibitem[He and McAuley(2016)]%
        {he2016ups}
\bibfield{author}{\bibinfo{person}{Ruining He} {and} \bibinfo{person}{Julian McAuley}.} \bibinfo{year}{2016}\natexlab{}.
\newblock \showarticletitle{Ups and downs: Modeling the visual evolution of fashion trends with one-class collaborative filtering}. In \bibinfo{booktitle}{\emph{proceedings of the 25th international conference on world wide web}}. \bibinfo{pages}{507--517}.
\newblock


\bibitem[He et~al\mbox{.}(2020)]%
        {he2020lightgcn}
\bibfield{author}{\bibinfo{person}{Xiangnan He}, \bibinfo{person}{Kuan Deng}, \bibinfo{person}{Xiang Wang}, \bibinfo{person}{Yan Li}, \bibinfo{person}{Yongdong Zhang}, {and} \bibinfo{person}{Meng Wang}.} \bibinfo{year}{2020}\natexlab{}.
\newblock \showarticletitle{Lightgcn: Simplifying and powering graph convolution network for recommendation}. In \bibinfo{booktitle}{\emph{Proceedings of the 43rd International ACM SIGIR conference on research and development in Information Retrieval}}. \bibinfo{pages}{639--648}.
\newblock


\bibitem[He et~al\mbox{.}(2017)]%
        {he2017neural}
\bibfield{author}{\bibinfo{person}{Xiangnan He}, \bibinfo{person}{Lizi Liao}, \bibinfo{person}{Hanwang Zhang}, \bibinfo{person}{Liqiang Nie}, \bibinfo{person}{Xia Hu}, {and} \bibinfo{person}{Tat-Seng Chua}.} \bibinfo{year}{2017}\natexlab{}.
\newblock \showarticletitle{Neural collaborative filtering}. In \bibinfo{booktitle}{\emph{Proceedings of the 26th international conference on world wide web}}. \bibinfo{pages}{173--182}.
\newblock


\bibitem[Huang et~al\mbox{.}(2021)]%
        {huang2021knowledge}
\bibfield{author}{\bibinfo{person}{Chao Huang}, \bibinfo{person}{Huance Xu}, \bibinfo{person}{Yong Xu}, \bibinfo{person}{Peng Dai}, \bibinfo{person}{Lianghao Xia}, \bibinfo{person}{Mengyin Lu}, \bibinfo{person}{Liefeng Bo}, \bibinfo{person}{Hao Xing}, \bibinfo{person}{Xiaoping Lai}, {and} \bibinfo{person}{Yanfang Ye}.} \bibinfo{year}{2021}\natexlab{}.
\newblock \showarticletitle{Knowledge-aware coupled graph neural network for social recommendation}. In \bibinfo{booktitle}{\emph{Proceedings of the AAAI conference on artificial intelligence}}, Vol.~\bibinfo{volume}{35}. \bibinfo{pages}{4115--4122}.
\newblock


\bibitem[Ji et~al\mbox{.}(2020)]%
        {ji2020dual}
\bibfield{author}{\bibinfo{person}{Shuyi Ji}, \bibinfo{person}{Yifan Feng}, \bibinfo{person}{Rongrong Ji}, \bibinfo{person}{Xibin Zhao}, \bibinfo{person}{Wanwan Tang}, {and} \bibinfo{person}{Yue Gao}.} \bibinfo{year}{2020}\natexlab{}.
\newblock \showarticletitle{Dual channel hypergraph collaborative filtering}. In \bibinfo{booktitle}{\emph{Proceedings of the 26th ACM SIGKDD international conference on knowledge discovery \& data mining}}. \bibinfo{pages}{2020--2029}.
\newblock


\bibitem[Jiang et~al\mbox{.}(2022)]%
        {jiang2022adamct}
\bibfield{author}{\bibinfo{person}{Juyong Jiang}, \bibinfo{person}{Jae~Boum Kim}, \bibinfo{person}{Yingtao Luo}, \bibinfo{person}{Kai Zhang}, {and} \bibinfo{person}{Sunghun Kim}.} \bibinfo{year}{2022}\natexlab{}.
\newblock \showarticletitle{AdaMCT: adaptive mixture of CNN-transformer for sequential recommendation}.
\newblock \bibinfo{journal}{\emph{arXiv preprint arXiv:2205.08776}} (\bibinfo{year}{2022}).
\newblock


\bibitem[Jin et~al\mbox{.}(2022)]%
        {jin2022code}
\bibfield{author}{\bibinfo{person}{Yiqiao Jin}, \bibinfo{person}{Yunsheng Bai}, \bibinfo{person}{Yanqiao Zhu}, \bibinfo{person}{Yizhou Sun}, {and} \bibinfo{person}{Wei Wang}.} \bibinfo{year}{2022}\natexlab{}.
\newblock \showarticletitle{Code Recommendation for Open Source Software Developers}. In \bibinfo{booktitle}{\emph{Proceedings of the ACM Web Conference 2023}}.
\newblock


\bibitem[Joachims et~al\mbox{.}(2017)]%
        {joachims2017accurately}
\bibfield{author}{\bibinfo{person}{Thorsten Joachims}, \bibinfo{person}{Laura Granka}, \bibinfo{person}{Bing Pan}, \bibinfo{person}{Helene Hembrooke}, {and} \bibinfo{person}{Geri Gay}.} \bibinfo{year}{2017}\natexlab{}.
\newblock \showarticletitle{Accurately interpreting clickthrough data as implicit feedback}. In \bibinfo{booktitle}{\emph{Acm Sigir Forum}}, Vol.~\bibinfo{volume}{51}. Acm New York, NY, USA, \bibinfo{pages}{4--11}.
\newblock


\bibitem[Joachims et~al\mbox{.}(2007)]%
        {joachims2007evaluating}
\bibfield{author}{\bibinfo{person}{Thorsten Joachims}, \bibinfo{person}{Laura Granka}, \bibinfo{person}{Bing Pan}, \bibinfo{person}{Helene Hembrooke}, \bibinfo{person}{Filip Radlinski}, {and} \bibinfo{person}{Geri Gay}.} \bibinfo{year}{2007}\natexlab{}.
\newblock \showarticletitle{Evaluating the accuracy of implicit feedback from clicks and query reformulations in web search}.
\newblock \bibinfo{journal}{\emph{ACM Transactions on Information Systems (TOIS)}} \bibinfo{volume}{25}, \bibinfo{number}{2} (\bibinfo{year}{2007}), \bibinfo{pages}{7--es}.
\newblock


\bibitem[Kang et~al\mbox{.}(2019)]%
        {kang2019recommender}
\bibfield{author}{\bibinfo{person}{Kyeongpil Kang}, \bibinfo{person}{Junwoo Park}, \bibinfo{person}{Wooyoung Kim}, \bibinfo{person}{Hojung Choe}, {and} \bibinfo{person}{Jaegul Choo}.} \bibinfo{year}{2019}\natexlab{}.
\newblock \showarticletitle{Recommender system using sequential and global preference via attention mechanism and topic modeling}. In \bibinfo{booktitle}{\emph{Proceedings of the 28th ACM international conference on information and knowledge management}}. \bibinfo{pages}{1543--1552}.
\newblock


\bibitem[Kang and McAuley(2018)]%
        {kang2018self}
\bibfield{author}{\bibinfo{person}{Wang-Cheng Kang} {and} \bibinfo{person}{Julian McAuley}.} \bibinfo{year}{2018}\natexlab{}.
\newblock \showarticletitle{Self-attentive sequential recommendation}. In \bibinfo{booktitle}{\emph{2018 IEEE international conference on data mining (ICDM)}}. IEEE, \bibinfo{pages}{197--206}.
\newblock


\bibitem[Li et~al\mbox{.}(2019b)]%
        {li2019adversarial}
\bibfield{author}{\bibinfo{person}{Chaozhuo Li}, \bibinfo{person}{Senzhang Wang}, \bibinfo{person}{Yukun Wang}, \bibinfo{person}{Philip Yu}, \bibinfo{person}{Yanbo Liang}, \bibinfo{person}{Yun Liu}, {and} \bibinfo{person}{Zhoujun Li}.} \bibinfo{year}{2019}\natexlab{b}.
\newblock \showarticletitle{Adversarial learning for weakly-supervised social network alignment}. In \bibinfo{booktitle}{\emph{Proceedings of the AAAI conference on artificial intelligence}}, Vol.~\bibinfo{volume}{33}. \bibinfo{pages}{996--1003}.
\newblock


\bibitem[Li et~al\mbox{.}(2017)]%
        {li2017ppne}
\bibfield{author}{\bibinfo{person}{Chaozhuo Li}, \bibinfo{person}{Senzhang Wang}, \bibinfo{person}{Dejian Yang}, \bibinfo{person}{Zhoujun Li}, \bibinfo{person}{Yang Yang}, \bibinfo{person}{Xiaoming Zhang}, {and} \bibinfo{person}{Jianshe Zhou}.} \bibinfo{year}{2017}\natexlab{}.
\newblock \showarticletitle{PPNE: property preserving network embedding}. In \bibinfo{booktitle}{\emph{Database Systems for Advanced Applications: 22nd International Conference, DASFAA 2017, Suzhou, China, March 27-30, 2017, Proceedings, Part I 22}}. Springer, \bibinfo{pages}{163--179}.
\newblock


\bibitem[Li et~al\mbox{.}(2019a)]%
        {Li0TG19}
\bibfield{author}{\bibinfo{person}{Guohao Li}, \bibinfo{person}{Matthias M{\"{u}}ller}, \bibinfo{person}{Ali~K. Thabet}, {and} \bibinfo{person}{Bernard Ghanem}.} \bibinfo{year}{2019}\natexlab{a}.
\newblock \showarticletitle{DeepGCNs: Can GCNs Go As Deep As CNNs?}. In \bibinfo{booktitle}{\emph{2019 {IEEE/CVF} International Conference on Computer Vision, {ICCV} 2019, Seoul, Korea (South), October 27 - November 2, 2019}}. \bibinfo{publisher}{{IEEE}}, \bibinfo{pages}{9266--9275}.
\newblock
\urldef\tempurl%
\url{https://doi.org/10.1109/ICCV.2019.00936}
\showDOI{\tempurl}


\bibitem[Li et~al\mbox{.}(2018)]%
        {deeper_insight}
\bibfield{author}{\bibinfo{person}{Qimai Li}, \bibinfo{person}{Zhichao Han}, {and} \bibinfo{person}{Xiao{-}Ming Wu}.} \bibinfo{year}{2018}\natexlab{}.
\newblock \showarticletitle{Deeper Insights Into Graph Convolutional Networks for Semi-Supervised Learning}. In \bibinfo{booktitle}{\emph{{AAAI}}}, \bibfield{editor}{\bibinfo{person}{Sheila~A. McIlraith} {and} \bibinfo{person}{Kilian~Q. Weinberger}} (Eds.). \bibinfo{publisher}{{AAAI} Press}, \bibinfo{pages}{3538--3545}.
\newblock


\bibitem[Liu et~al\mbox{.}(2023)]%
        {liu2023chatgpt}
\bibfield{author}{\bibinfo{person}{Junling Liu}, \bibinfo{person}{Chao Liu}, \bibinfo{person}{Peilin Zhou}, \bibinfo{person}{Renjie Lv}, \bibinfo{person}{Kang Zhou}, {and} \bibinfo{person}{Yan Zhang}.} \bibinfo{year}{2023}\natexlab{}.
\newblock \showarticletitle{Is chatgpt a good recommender? a preliminary study}.
\newblock \bibinfo{journal}{\emph{arXiv preprint arXiv:2304.10149}} (\bibinfo{year}{2023}).
\newblock


\bibitem[Luo et~al\mbox{.}(2021)]%
        {luo2021stan}
\bibfield{author}{\bibinfo{person}{Yingtao Luo}, \bibinfo{person}{Qiang Liu}, {and} \bibinfo{person}{Zhaocheng Liu}.} \bibinfo{year}{2021}\natexlab{}.
\newblock \showarticletitle{Stan: Spatio-temporal attention network for next location recommendation}. In \bibinfo{booktitle}{\emph{Proceedings of the web conference 2021}}. \bibinfo{pages}{2177--2185}.
\newblock


\bibitem[Min et~al\mbox{.}(2022)]%
        {min2022transformer}
\bibfield{author}{\bibinfo{person}{Erxue Min}, \bibinfo{person}{Runfa Chen}, \bibinfo{person}{Yatao Bian}, \bibinfo{person}{Tingyang Xu}, \bibinfo{person}{Kangfei Zhao}, \bibinfo{person}{Wenbing Huang}, \bibinfo{person}{Peilin Zhao}, \bibinfo{person}{Junzhou Huang}, \bibinfo{person}{Sophia Ananiadou}, {and} \bibinfo{person}{Yu Rong}.} \bibinfo{year}{2022}\natexlab{}.
\newblock \showarticletitle{Transformer for graphs: An overview from architecture perspective}.
\newblock \bibinfo{journal}{\emph{arXiv preprint arXiv:2202.08455}} (\bibinfo{year}{2022}).
\newblock


\bibitem[Oono and Suzuki(2020)]%
        {defsubspace}
\bibfield{author}{\bibinfo{person}{Kenta Oono} {and} \bibinfo{person}{Taiji Suzuki}.} \bibinfo{year}{2020}\natexlab{}.
\newblock \showarticletitle{Graph Neural Networks Exponentially Lose Expressive Power for Node Classification}. In \bibinfo{booktitle}{\emph{8th International Conference on Learning Representations, {ICLR} 2020, Addis Ababa, Ethiopia, April 26-30, 2020}}. \bibinfo{publisher}{OpenReview.net}.
\newblock


\bibitem[Ren et~al\mbox{.}(2020)]%
        {ren2020sequential}
\bibfield{author}{\bibinfo{person}{Ruiyang Ren}, \bibinfo{person}{Zhaoyang Liu}, \bibinfo{person}{Yaliang Li}, \bibinfo{person}{Wayne~Xin Zhao}, \bibinfo{person}{Hui Wang}, \bibinfo{person}{Bolin Ding}, {and} \bibinfo{person}{Ji-Rong Wen}.} \bibinfo{year}{2020}\natexlab{}.
\newblock \showarticletitle{Sequential recommendation with self-attentive multi-adversarial network}. In \bibinfo{booktitle}{\emph{Proceedings of the 43rd international ACM SIGIR conference on research and development in information retrieval}}. \bibinfo{pages}{89--98}.
\newblock


\bibitem[Rendle et~al\mbox{.}(2020)]%
        {rendle2020neural}
\bibfield{author}{\bibinfo{person}{Steffen Rendle}, \bibinfo{person}{Walid Krichene}, \bibinfo{person}{Li Zhang}, {and} \bibinfo{person}{John Anderson}.} \bibinfo{year}{2020}\natexlab{}.
\newblock \showarticletitle{Neural collaborative filtering vs. matrix factorization revisited}. In \bibinfo{booktitle}{\emph{Proceedings of the 14th ACM Conference on Recommender Systems}}. \bibinfo{pages}{240--248}.
\newblock


\bibitem[Sedhain et~al\mbox{.}(2015)]%
        {sedhain2015autorec}
\bibfield{author}{\bibinfo{person}{Suvash Sedhain}, \bibinfo{person}{Aditya~Krishna Menon}, \bibinfo{person}{Scott Sanner}, {and} \bibinfo{person}{Lexing Xie}.} \bibinfo{year}{2015}\natexlab{}.
\newblock \showarticletitle{Autorec: Autoencoders meet collaborative filtering}. In \bibinfo{booktitle}{\emph{Proceedings of the 24th international conference on World Wide Web}}. \bibinfo{pages}{111--112}.
\newblock


\bibitem[Sun et~al\mbox{.}(2019)]%
        {sun2019bert4rec}
\bibfield{author}{\bibinfo{person}{Fei Sun}, \bibinfo{person}{Jun Liu}, \bibinfo{person}{Jian Wu}, \bibinfo{person}{Changhua Pei}, \bibinfo{person}{Xiao Lin}, \bibinfo{person}{Wenwu Ou}, {and} \bibinfo{person}{Peng Jiang}.} \bibinfo{year}{2019}\natexlab{}.
\newblock \showarticletitle{BERT4Rec: Sequential recommendation with bidirectional encoder representations from transformer}. In \bibinfo{booktitle}{\emph{Proceedings of the 28th ACM international conference on information and knowledge management}}. \bibinfo{pages}{1441--1450}.
\newblock


\bibitem[Tang and Wang(2018)]%
        {tang2018personalized}
\bibfield{author}{\bibinfo{person}{Jiaxi Tang} {and} \bibinfo{person}{Ke Wang}.} \bibinfo{year}{2018}\natexlab{}.
\newblock \showarticletitle{Personalized top-n sequential recommendation via convolutional sequence embedding}. In \bibinfo{booktitle}{\emph{Proceedings of the eleventh ACM international conference on web search and data mining}}. \bibinfo{pages}{565--573}.
\newblock


\bibitem[Vaswani et~al\mbox{.}(2017)]%
        {attention}
\bibfield{author}{\bibinfo{person}{Ashish Vaswani}, \bibinfo{person}{Noam Shazeer}, \bibinfo{person}{Niki Parmar}, \bibinfo{person}{Jakob Uszkoreit}, \bibinfo{person}{Llion Jones}, \bibinfo{person}{Aidan~N. Gomez}, \bibinfo{person}{Lukasz Kaiser}, {and} \bibinfo{person}{Illia Polosukhin}.} \bibinfo{year}{2017}\natexlab{}.
\newblock \showarticletitle{Attention Is All You Need}. In \bibinfo{booktitle}{\emph{Proceedings of the 31st Conference on Neural Information Processing Systems}}.
\newblock


\bibitem[Veli{\v{c}}kovi{\'c} et~al\mbox{.}(2018)]%
        {gat}
\bibfield{author}{\bibinfo{person}{Petar Veli{\v{c}}kovi{\'c}}, \bibinfo{person}{Guillem Cucurull}, \bibinfo{person}{Arantxa Casanova}, \bibinfo{person}{Adriana Romero}, \bibinfo{person}{Pietro Lio}, {and} \bibinfo{person}{Yoshua Bengio}.} \bibinfo{year}{2018}\natexlab{}.
\newblock \showarticletitle{Graph attention networks}. In \bibinfo{booktitle}{\emph{Proceedings of the 6th International Conference on Learning Representations}}.
\newblock


\bibitem[Wang et~al\mbox{.}(2020)]%
        {wang2020next}
\bibfield{author}{\bibinfo{person}{Jianling Wang}, \bibinfo{person}{Kaize Ding}, \bibinfo{person}{Liangjie Hong}, \bibinfo{person}{Huan Liu}, {and} \bibinfo{person}{James Caverlee}.} \bibinfo{year}{2020}\natexlab{}.
\newblock \showarticletitle{Next-item recommendation with sequential hypergraphs}. In \bibinfo{booktitle}{\emph{Proceedings of the 43rd international ACM SIGIR conference on research and development in information retrieval}}. \bibinfo{pages}{1101--1110}.
\newblock


\bibitem[Wang et~al\mbox{.}(2019)]%
        {wang2019neural}
\bibfield{author}{\bibinfo{person}{Xiang Wang}, \bibinfo{person}{Xiangnan He}, \bibinfo{person}{Meng Wang}, \bibinfo{person}{Fuli Feng}, {and} \bibinfo{person}{Tat-Seng Chua}.} \bibinfo{year}{2019}\natexlab{}.
\newblock \showarticletitle{Neural graph collaborative filtering}. In \bibinfo{booktitle}{\emph{Proceedings of the 42nd international ACM SIGIR conference on Research and development in Information Retrieval}}. \bibinfo{pages}{165--174}.
\newblock


\bibitem[Wu et~al\mbox{.}(2021)]%
        {wu2021self}
\bibfield{author}{\bibinfo{person}{Jiancan Wu}, \bibinfo{person}{Xiang Wang}, \bibinfo{person}{Fuli Feng}, \bibinfo{person}{Xiangnan He}, \bibinfo{person}{Liang Chen}, \bibinfo{person}{Jianxun Lian}, {and} \bibinfo{person}{Xing Xie}.} \bibinfo{year}{2021}\natexlab{}.
\newblock \showarticletitle{Self-supervised graph learning for recommendation}. In \bibinfo{booktitle}{\emph{Proceedings of the 44th international ACM SIGIR conference on research and development in information retrieval}}. \bibinfo{pages}{726--735}.
\newblock


\bibitem[Xia et~al\mbox{.}(2022)]%
        {xia2022self}
\bibfield{author}{\bibinfo{person}{Lianghao Xia}, \bibinfo{person}{Chao Huang}, {and} \bibinfo{person}{Chuxu Zhang}.} \bibinfo{year}{2022}\natexlab{}.
\newblock \showarticletitle{Self-supervised hypergraph transformer for recommender systems}. In \bibinfo{booktitle}{\emph{Proceedings of the 28th ACM SIGKDD Conference on Knowledge Discovery and Data Mining}}. \bibinfo{pages}{2100--2109}.
\newblock


\bibitem[Yang et~al\mbox{.}(2021)]%
        {yang2021graphformers}
\bibfield{author}{\bibinfo{person}{Junhan Yang}, \bibinfo{person}{Zheng Liu}, \bibinfo{person}{Shitao Xiao}, \bibinfo{person}{Chaozhuo Li}, \bibinfo{person}{Defu Lian}, \bibinfo{person}{Sanjay Agrawal}, \bibinfo{person}{Amit Singh}, \bibinfo{person}{Guangzhong Sun}, {and} \bibinfo{person}{Xing Xie}.} \bibinfo{year}{2021}\natexlab{}.
\newblock \showarticletitle{Graphformers: Gnn-nested transformers for representation learning on textual graph}.
\newblock \bibinfo{journal}{\emph{Advances in Neural Information Processing Systems}}  \bibinfo{volume}{34} (\bibinfo{year}{2021}), \bibinfo{pages}{28798--28810}.
\newblock


\bibitem[Yang et~al\mbox{.}(2022)]%
        {yang2022semantic}
\bibfield{author}{\bibinfo{person}{Zaiquan Yang}, \bibinfo{person}{Yuqi Zhang}, \bibinfo{person}{Yuxin Du}, {and} \bibinfo{person}{Chao Tong}.} \bibinfo{year}{2022}\natexlab{}.
\newblock \showarticletitle{Semantic-aligned reinforced attention model for zero-shot learning}.
\newblock \bibinfo{journal}{\emph{Image and Vision Computing}}  \bibinfo{volume}{128} (\bibinfo{year}{2022}), \bibinfo{pages}{104586}.
\newblock


\bibitem[Yao et~al\mbox{.}(2021)]%
        {yao2021self}
\bibfield{author}{\bibinfo{person}{Tiansheng Yao}, \bibinfo{person}{Xinyang Yi}, \bibinfo{person}{Derek~Zhiyuan Cheng}, \bibinfo{person}{Felix Yu}, \bibinfo{person}{Ting Chen}, \bibinfo{person}{Aditya Menon}, \bibinfo{person}{Lichan Hong}, \bibinfo{person}{Ed~H Chi}, \bibinfo{person}{Steve Tjoa}, \bibinfo{person}{Jieqi Kang}, {et~al\mbox{.}}} \bibinfo{year}{2021}\natexlab{}.
\newblock \showarticletitle{Self-supervised learning for large-scale item recommendations}. In \bibinfo{booktitle}{\emph{Proceedings of the 30th ACM International Conference on Information \& Knowledge Management}}. \bibinfo{pages}{4321--4330}.
\newblock


\bibitem[Ying et~al\mbox{.}(2018)]%
        {ying2018graph}
\bibfield{author}{\bibinfo{person}{Rex Ying}, \bibinfo{person}{Ruining He}, \bibinfo{person}{Kaifeng Chen}, \bibinfo{person}{Pong Eksombatchai}, \bibinfo{person}{William~L Hamilton}, {and} \bibinfo{person}{Jure Leskovec}.} \bibinfo{year}{2018}\natexlab{}.
\newblock \showarticletitle{Graph convolutional neural networks for web-scale recommender systems}. In \bibinfo{booktitle}{\emph{Proceedings of the 24th ACM SIGKDD international conference on knowledge discovery \& data mining}}. \bibinfo{pages}{974--983}.
\newblock


\bibitem[Yu et~al\mbox{.}(2021)]%
        {yu2021self}
\bibfield{author}{\bibinfo{person}{Junliang Yu}, \bibinfo{person}{Hongzhi Yin}, \bibinfo{person}{Jundong Li}, \bibinfo{person}{Qinyong Wang}, \bibinfo{person}{Nguyen Quoc~Viet Hung}, {and} \bibinfo{person}{Xiangliang Zhang}.} \bibinfo{year}{2021}\natexlab{}.
\newblock \showarticletitle{Self-supervised multi-channel hypergraph convolutional network for social recommendation}. In \bibinfo{booktitle}{\emph{Proceedings of the web conference 2021}}. \bibinfo{pages}{413--424}.
\newblock


\bibitem[Zhang et~al\mbox{.}(2020)]%
        {Zhang2020Adaptive}
\bibfield{author}{\bibinfo{person}{Kai Zhang}, \bibinfo{person}{Yaokang Zhu}, \bibinfo{person}{Jun Wang}, {and} \bibinfo{person}{Jie Zhang}.} \bibinfo{year}{2020}\natexlab{}.
\newblock \showarticletitle{Adaptive Structural Fingerprints for Graph Attention Networks}. In \bibinfo{booktitle}{\emph{International Conference on Learning Representations}}.
\newblock
\urldef\tempurl%
\url{https://openreview.net/forum?id=BJxWx0NYPr}
\showURL{%
\tempurl}


\bibitem[Zhang et~al\mbox{.}(2023)]%
        {zhang2023efficiently}
\bibfield{author}{\bibinfo{person}{Peiyan Zhang}, \bibinfo{person}{Jiayan Guo}, \bibinfo{person}{Chaozhuo Li}, \bibinfo{person}{Yueqi Xie}, \bibinfo{person}{Jae~Boum Kim}, \bibinfo{person}{Yan Zhang}, \bibinfo{person}{Xing Xie}, \bibinfo{person}{Haohan Wang}, {and} \bibinfo{person}{Sunghun Kim}.} \bibinfo{year}{2023}\natexlab{}.
\newblock \showarticletitle{Efficiently leveraging multi-level user intent for session-based recommendation via atten-mixer network}. In \bibinfo{booktitle}{\emph{Proceedings of the Sixteenth ACM International Conference on Web Search and Data Mining}}. \bibinfo{pages}{168--176}.
\newblock


\bibitem[Zhang et~al\mbox{.}(2021)]%
        {zhang2021causal}
\bibfield{author}{\bibinfo{person}{Yang Zhang}, \bibinfo{person}{Fuli Feng}, \bibinfo{person}{Xiangnan He}, \bibinfo{person}{Tianxin Wei}, \bibinfo{person}{Chonggang Song}, \bibinfo{person}{Guohui Ling}, {and} \bibinfo{person}{Yongdong Zhang}.} \bibinfo{year}{2021}\natexlab{}.
\newblock \showarticletitle{Causal intervention for leveraging popularity bias in recommendation}. In \bibinfo{booktitle}{\emph{Proceedings of the 44th International ACM SIGIR Conference on Research and Development in Information Retrieval}}. \bibinfo{pages}{11--20}.
\newblock


\bibitem[Zhao et~al\mbox{.}(2022)]%
        {zhao2022learning}
\bibfield{author}{\bibinfo{person}{Jianan Zhao}, \bibinfo{person}{Meng Qu}, \bibinfo{person}{Chaozhuo Li}, \bibinfo{person}{Hao Yan}, \bibinfo{person}{Qian Liu}, \bibinfo{person}{Rui Li}, \bibinfo{person}{Xing Xie}, {and} \bibinfo{person}{Jian Tang}.} \bibinfo{year}{2022}\natexlab{}.
\newblock \showarticletitle{Learning on large-scale text-attributed graphs via variational inference}.
\newblock \bibinfo{journal}{\emph{arXiv preprint arXiv:2210.14709}} (\bibinfo{year}{2022}).
\newblock


\bibitem[Zhou et~al\mbox{.}(2023a)]%
        {zhou2023exploring}
\bibfield{author}{\bibinfo{person}{Peilin Zhou}, \bibinfo{person}{Meng Cao}, \bibinfo{person}{You-Liang Huang}, \bibinfo{person}{Qichen Ye}, \bibinfo{person}{Peiyan Zhang}, \bibinfo{person}{Junling Liu}, \bibinfo{person}{Yueqi Xie}, \bibinfo{person}{Yining Hua}, {and} \bibinfo{person}{Jaeboum Kim}.} \bibinfo{year}{2023}\natexlab{a}.
\newblock \showarticletitle{Exploring recommendation capabilities of gpt-4v (ision): A preliminary case study}.
\newblock \bibinfo{journal}{\emph{arXiv preprint arXiv:2311.04199}} (\bibinfo{year}{2023}).
\newblock


\bibitem[Zhou et~al\mbox{.}(2023b)]%
        {zhou2023attention}
\bibfield{author}{\bibinfo{person}{Peilin Zhou}, \bibinfo{person}{Qichen Ye}, \bibinfo{person}{Yueqi Xie}, \bibinfo{person}{Jingqi Gao}, \bibinfo{person}{Shoujin Wang}, \bibinfo{person}{Jae~Boum Kim}, \bibinfo{person}{Chenyu You}, {and} \bibinfo{person}{Sunghun Kim}.} \bibinfo{year}{2023}\natexlab{b}.
\newblock \showarticletitle{Attention Calibration for Transformer-based Sequential Recommendation}. In \bibinfo{booktitle}{\emph{Proceedings of the 32nd ACM International Conference on Information and Knowledge Management}}. \bibinfo{pages}{3595--3605}.
\newblock


\end{thebibliography}

\clearpage

\end{document}